\documentclass{article}

\usepackage[preprint]{neurips_2026}

\usepackage[utf8]{inputenc} 
\usepackage[T1]{fontenc}    
\usepackage{microtype}
\usepackage{graphicx}
\usepackage{subcaption}
\usepackage{booktabs} 
\usepackage{hyperref}
\usepackage{url}
\usepackage{amsfonts}
\usepackage{nicefrac}
\usepackage{xcolor,colortbl}

\usepackage{amsmath}
\usepackage{amssymb}
\usepackage{mathtools}
\usepackage{amsthm}
\usepackage[most]{tcolorbox}
\usepackage{listings}
\usepackage{multirow}
\usepackage{tabularx}
\usepackage{makecell}
\usepackage{CJKutf8}
\usepackage{wrapfig}
\usepackage{enumitem}
\usepackage{caption}
\usepackage{marvosym}

\definecolor{codegreen}{rgb}{0,0.6,0}
\definecolor{codegray}{rgb}{0.5,0.5,0.5}
\definecolor{codepurple}{rgb}{0.58,0,0.82}
\definecolor{backcolour}{rgb}{0.95,0.95,0.92}

\lstdefinestyle{mystyle}{
    commentstyle= \color{red!50!green!50!blue!50},
    keywordstyle= \color{blue!70},
    numberstyle=\tiny\color{codegray},
    stringstyle=\color{codepurple},
    basicstyle=\ttfamily\footnotesize,
    breakatwhitespace=false,
    breaklines=true,
    captionpos=b,
    keepspaces=true,
    numbers=left,
    numbersep=-5pt,
    showspaces=false,
    showstringspaces=false,
    showtabs=false,
    tabsize=2,
    frame=single
}

\usepackage[capitalize,noabbrev]{cleveref}

\theoremstyle{plain}
\newtheorem{theorem}{Theorem}[section]

\theoremstyle{definition}
\newtheorem{definition}[theorem]{Definition}

\theoremstyle{remark}

\definecolor{darkgreen}{RGB}{0,128,0}

\usepackage[textsize=tiny]{todonotes}

\newcommand{\rebuttal}[1]{#1}

\title{BEAM: Binary Expert Activation Masking for Dynamic Routing in MoE}

\author{
    Juntong Wu\textsuperscript{1,2,*},
    Jialiang Cheng\textsuperscript{1,*, \Letter},
    Qishen Yin\textsuperscript{2},
    Yue Dai \textsuperscript{1}, \And \vspace{1mm}
    Yuliang Yan\textsuperscript{1},
    Fuyu Lv\textsuperscript{1},
    Ou Dan\textsuperscript{1},
    Li Yuan\textsuperscript{2, \Letter} \\ 
    \textsuperscript{1} Taobao \& Tmall Group of Alibaba \\
    \textsuperscript{2} Shenzhen Graduate School, Peking University \\
    \small{
   \textbf{Correspondence:} \href{mailto:jichen.cjl@alibaba-inc.com}{jichen.cjl@alibaba-inc.com},
   \href{mailto:yuanli-ece@pku.edu.cn}{yuanli-ece@pku.edu.cn} }
    \thanks{\textsuperscript{*} Equal contribution  \quad \textsuperscript{\Letter} \ Corresponding author}
}

\makeatletter
\def\thanks#1{\protected@xdef\@thanks{\@thanks
        \protect\footnotetext{#1}}}
\makeatother

\begin{document}

\maketitle

\vspace{-5mm}
\begin{abstract}

Mixture-of-Experts (MoE) architectures enhance the efficiency of large language models by activating only a subset of experts per token. However, standard MoE employs a fixed Top-K routing strategy, leading to redundant computation and suboptimal inference latency.
Existing acceleration methods either require costly retraining with architectural changes or suffer from severe performance drop at high sparsity due to train-inference mismatch.
To address these limitations, we propose \textbf{BEAM} (Binary Expert Activation Masking), a novel method that learns token-adaptive expert selection via trainable binary masks. With a straight-through estimator and an auxiliary regularization loss, BEAM induces dynamic expert sparsity through end-to-end training while maintaining model capability. We further implement an efficient custom CUDA kernel for BEAM, ensuring seamless integration with the vLLM inference framework.
Experiments show that BEAM retains over 98\% of the original model's performance while reducing MoE layer FLOPs by up to 85\%, achieving up to 2.5$\times$ faster decoding and 1.4$\times$ higher throughput, demonstrating its effectiveness as a practical, plug-and-play solution for efficient MoE inference. Code implementation of BEAM can be found in~\url{https://github.com/Time-Rune/BEAM}.

\end{abstract}

\section{Introduction}


Mixture-of-Experts (MoE) enables efficient scaling through sparse activation, where each token is processed by only a small subset of specialized feed-forward network (FFN) experts~\citep{yang2025qwen3,liu2024deepseek,jiang2024mixtral}.
\begin{wrapfigure}{r}{0.52\textwidth}
    \vspace{-1mm}
    \centering
    \captionsetup{font=small}
    \includegraphics[width=\linewidth]{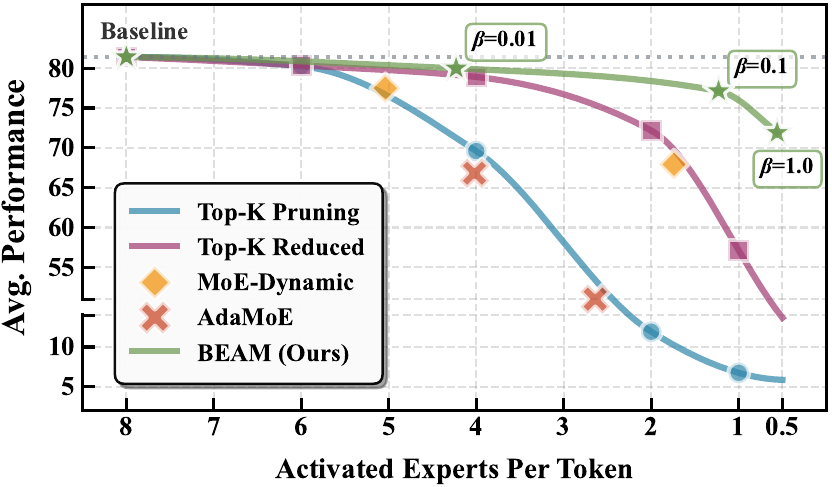}
    \vspace{-6mm}
    \caption{\rebuttal{Performance--sparsity trade-off of BEAM and baselines on Qwen3-30B-A3B.}}
    \label{fig:performance_comparison}
    \vspace{-6mm}
\end{wrapfigure}
The dominant paradigm for expert selection is the fixed Top-K routing mechanism, which selects the K experts with the highest router logits for each token~\citep{shazeer2017outrageously,lepikhin2020gshard}.
While simple and widely adopted, it ignores token-level complexity, leading to redundant computation for simple tokens~\citep{huang2024harder,zeng2024adamoe}.
This inefficiency ultimately limits the potential for faster MoE model inference.


To address the inefficiency of fixed Top-K routing, recent work has explored dynamic expert activation, falling into three categories.
The first modifies the routing logits to enable token-adaptive expert counts~\citep{huang2024harder,lu2024not,yang2024xmoe,aghdam2024moe,guo2024dynamic}, but fails to skip redundant high-weight experts and enforces a minimum activation floor, limiting achievable sparsity.
The second introduces special experts such as zero-computation null experts to control sparsity~\citep{zeng2024adamoe,jin2024moe++,team2025introducing}, yet requires additional hyperparameters and complicated fine-tuning process, and only enables passive, indirect sparsity control.
The third merges or prunes experts statically~\citep{chen2025retrainingfree,liu2024efficient,yang2024moe}, but cannot adapt to input complexity at inference time and often suffers from severe performance degradation at high sparsity levels.

\begin{wrapfigure}{r}{0.54\textwidth}
    \vspace{-4mm}
    \centering
    \captionsetup{font=small}
    \includegraphics[width=\linewidth]{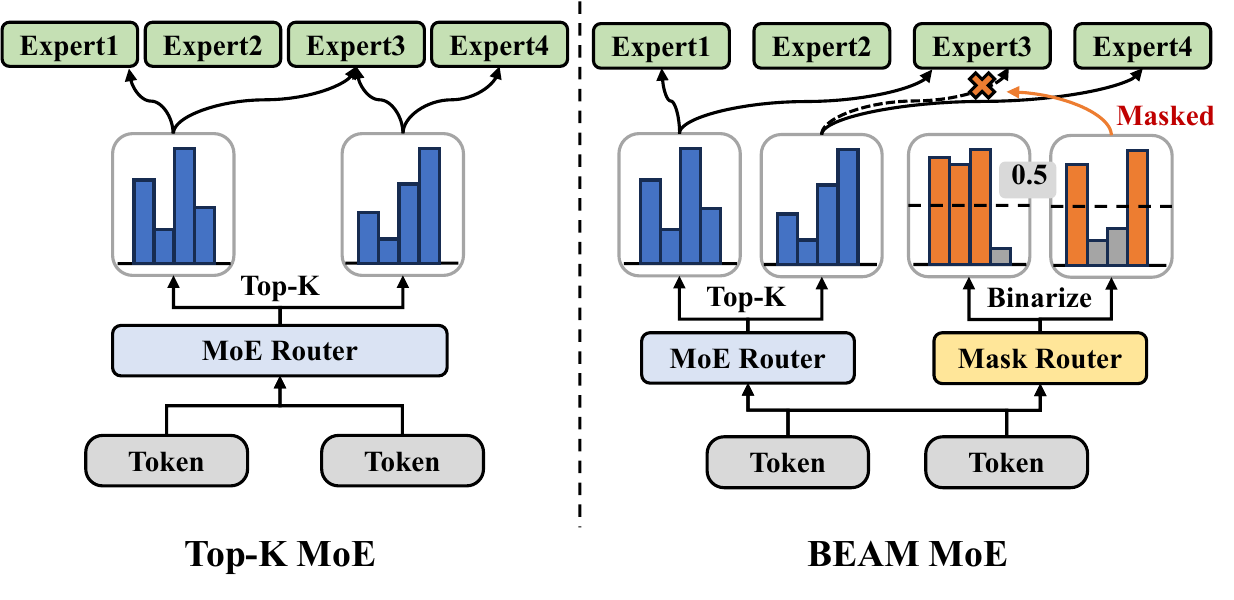}
    \vspace{-6mm}
    \caption{\rebuttal{Vanilla Top-$K$ vs.\ BEAM: BEAM learns a binary mask over Top-$K$ candidates for token-adaptive activation.}}
    \label{fig:topk_compare}
    \vspace{-1mm}
\end{wrapfigure}

In this work, we propose \textbf{BEAM} (\textbf{B}inary \textbf{E}xpert \textbf{A}ctivation \textbf{M}asking), a novel dynamic routing framework designed to achieve extreme expert sparsity and inference speedups in MoE models. As shown in Figure~\ref{fig:topk_compare}, BEAM introduces a lightweight, learnable mask router that generates a binary mask applied to the top-K candidate experts from the primary router, selectively deactivating redundant ones. 
Sparsity is encouraged via an auxiliary regularization loss, and gradients are propagated through the binary mask using the straight-through estimator (STE)~\citep{bengio2013estimating}.
Crucially, BEAM decouples sparsity control from expert selection. The primary router still handles load balancing and expert choice, while the mask router solely determines activation count. This separation avoids conflicts and enables more activation patterns within the Top-K candidate set, providing fine-grained, token-adaptive sparsity control that fixed Top-K or logits-based methods cannot express. To demonstrate the practical impact, we integrate BEAM into vLLM~\citep{kwon2023efficient}
through a custom CUDA kernel, requiring only a single-line change and delivering significant real-world speedups, which makes BEAM a practical, plug-and-play solution for efficient MoE deployment.





Our contributions are summarized as follows:
\vspace{-2mm}
\begin{itemize}[leftmargin=6mm]
\item We propose BEAM, a novel dynamic routing framework that achieves extreme expert sparsity via a learnable mask router. It directly prunes redundant experts from the Top-K set for token-adaptive computation, in contrast to existing indirect or post-hoc approaches.
\item We provide a practical, plug-and-play  deployment solution by integrating BEAM into vLLM through a custom CUDA kernel, requiring minimal code changes. 
\item Extensive experiments show BEAM preserves over 98\% of performance while reducing MoE layer FLOPs by up to 85\% (Figure~\ref{fig:performance_comparison}), yielding 1.4$\times$ higher throughput and 2.5$\times$ faster decoding.
\end{itemize}

\section{Related Work}


\noindent\textbf{Routing Logits Modification~} 
These methods modify routing logits to enable token-adaptive expert counts. MoE-Dynamic~\citep{huang2024harder} and XMoE~\citep{yang2024xmoe} activate experts until the cumulative probability exceeds a threshold. 
DTop-p~\citep{jin2025sparsity} improves MoE-Dynamic by replacing the fixed threshold with a learnable sparsity controller. 
Adaptive Gating~\citep{li2023adaptive} and NAEE~\citep{lu2024not} dynamically switches between Top-1 and Top-2 based on the gap between the top two logits.
DA-MoE~\citep{aghdam2024moe} computes token importance from attention scores to allocate a dynamic Top-K. 
DynMoE~\citep{guo2024dynamic} replaces the softmax router with per-expert sigmoid gates.
\rebuttal{MaskMoE~\citep{su2024maskmoe} employs static vocabulary-based masks derived from pretraining data distributions to improve rare-token expert assignment.}
However, most of them rely on the unverified heuristic that low entropy of routing logits implies fewer needed experts, fail to skip redundant high-weight experts, and require at least one active expert, preventing acceleration.

\noindent\textbf{Special Experts~} 
These methods reduce FLOPs by routing tokens to experts that incur no computation. AdaMoE~\citep{zeng2024adamoe} introduces null experts that outputs zero. 
LongCat~\citep{team2025introducing} uses zero-computation experts that return the input as their output.
MoE++~\citep{jin2024moe++} extended this idea with three types of zero-computation experts.
However, these methods introduce extra hyperparameters and achieve sparsity indirectly via passive placeholder routing rather than explicit expert minimization, undermining plug-and-play usability.

\noindent\textbf{Static Expert Merging and Pruning~} 
These training-free methods reduce redundancy by merging or pruning experts.
DEK~\citep{zhang2025diversifying} groups similar experts in feature space and merges experts in each group. 
EEP~\citep{liu2024efficient} utilizes a gradient-free evolutionary search to determine pruning and merging patterns. 
MC-SMoE~\citep{li2023merge} leverages routing statistics to guide expert merging and decomposes the merged experts into low-rank and structural sparse alternatives.
HC-SMoE~\citep{chen2025retrainingfree} applies hierarchical clustering on expert outputs to merge experts.
However, these methods cannot adapt to the varying complexity of input tokens at inference time and often suffer performance degradation under high compression.

\section{Method}

\subsection{Preliminaries and Motivation}
MoE replaces dense FFN layers with $N$ expert networks $\{\mathcal{E}_1, \dots, \mathcal{E}_N\}$ and a router $\mathcal{R}$. Given an input token $\mathbf{x} \in \mathbb{R}^{d_h}$, the router computes logits $\mathbf{r} = \mathcal{R}(\mathbf{x}) \in \mathbb{R}^N$, which are converted into routing weights via softmax. Under standard Top-K routing, only the K experts with the largest routing logits are activated. Specifically, the $\mathrm{Top}\text{-}K(\cdot)$ operator retains the K largest values in $\mathbf{r}$ and sets the remaining  entries to $-\infty$, yielding routing weights:
\begin{equation}
    \mathbf{g}_i = \mathrm{Softmax}(\mathrm{Top}\text{-}K(\mathbf{r}))_i.
\end{equation}
The MoE output is a weighted sum of expert outputs:
\begin{equation}
    \mathbf{y} = \sum_{i=1}^N \mathbf{g}_i \cdot \mathcal{E}_i(\mathbf{x}),
\end{equation}
where each expert $\mathcal{E}_i$ typically follows a Gated Linear Unit (GLU) structure:
\begin{equation}
    \mathcal{E}_i(\mathbf{x}) = \left( \delta(\mathbf{x} \mathbf{W}_{\mathrm{gate}}^{(i)}) \odot (\mathbf{x} \mathbf{W}_{\mathrm{up}}^{(i)}) \right) \mathbf{W}_{\mathrm{down}}^{(i)}.
\end{equation}

Although Top-K routing enables scalable training, it assigns a uniform computational budget to all tokens, causing redundancy for simple ones.

\rebuttal{Existing dynamic routing methods attempt to address this problem but remain limited in practice.
First, these approaches implicitly treat routing rank as a proxy for expert importance. However, a lower-ranked expert can still be critical for a given token while a high-weight one may be redundant, which is empirically validated in Section~\ref{sec:position_masking} and Appendix~\ref{sec:layerwise_rank}.
Second, cumulative probability thresholds and null experts cannot actively prune redundant experts, limiting compression ratios (Section~\ref{sec:perf_comparison}).
Third, these methods entangle expert selection, load balancing, and sparsity control in a single router, creating inherent gradient conflicts, thereby degrading model capacity (Section~\ref{sec:perf_comparison}).
}

\subsection{BEAM: Binary Expert Activation Masking}

\begin{figure}[!htp]
    \centering
    \includegraphics[width=\linewidth]{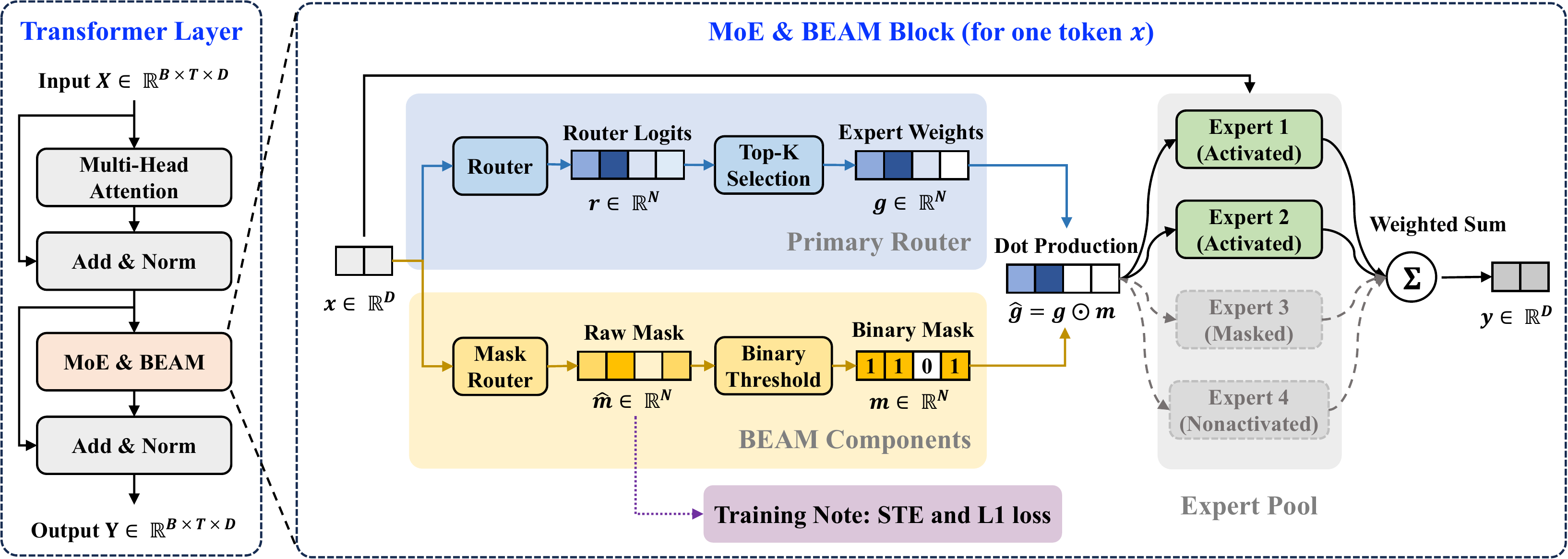}
    \caption{The illustration of our proposed BEAM method with 4 experts and K=3 as an example. 
    }
    \label{fig:pipeline}
\end{figure}

The above limitations motivate BEAM, which enables token-adaptive expert activation by introducing a lightweight and learnable mask router that generates a binary mask to selectively deactivate redundant experts from the standard Top-K candidate set, as shown in Figure~\ref{fig:pipeline}. Formally, given an input token embedding $\mathbf{x} \in \mathbb{R}^{d_h}$, BEAM operates in four steps.

\noindent \textbf{Step 1: Standard Top-K Routing.}  
The primary router $\mathcal{R}$ computes logits $\mathbf{r} = \mathcal{R}(\mathbf{x}) \in \mathbb{R}^N$, where $N$ is the total number of experts. The $\mathrm{Top}\text{-}K(\cdot)$ operator retains the K largest values and sets the rest to $-\infty$. The normalized routing weights are computed as:
\begin{equation}
    \mathbf{g}_i = \mathrm{Softmax}(\mathrm{Top}\text{-}K(\mathbf{r}))_i, \quad i=1,\dots,N,
\end{equation}
where $\mathbf{g}_i > 0$ only for the top K experts and $\sum_{i=1}^N \mathbf{g}_i = 1$.

\noindent \textbf{Step 2: Raw Mask Generation.}  
A lightweight auxiliary mask router, parameterized by $\mathbf{W}_m \in \mathbb{R}^{d_h \times N}$, processes the same input token $\mathbf{x}$ to generate a raw mask $\hat{\mathbf{m}}$. We apply a Sigmoid activation $\sigma$ to constrain the raw mask values to the range $(0, 1)$:
\begin{equation}
\hat{\mathbf{m}} = \sigma(\mathbf{x} \mathbf{W}_m).
\end{equation}
$\hat{\mathbf{m}}$ reflects the model's confidence in the necessity of each expert for the current token.

\noindent \textbf{Step 3: Binary Masking.}  
We binarize the raw mask $\hat{\mathbf{m}}$ using a fixed threshold of $\tau=0.5$ to obtain a discrete mask $\mathbf{m} \in \{0, 1\}^N$:
\begin{equation}
    \mathbf{m}_i = 
    \begin{cases}
        1, & \text{if } \hat{\mathbf{m}}_i \geq 0.5, \\
        0, & \text{otherwise},
    \end{cases}.
    \label{eq:binary_threshold}
\end{equation}
Since $\mathbf{m}_i = 0$ disables expert $i$ regardless of its Top-K status, the number of activated experts per token can be possibly reduced to $0$.

\noindent \textbf{Step 4: Masked Aggregation.}  
The final routing weights $\hat{\mathbf{g}}$ are obtained by performing an element-wise multiplication between the Top-K weights $\mathbf{g}$ and the binary mask $\mathbf{m}$:
\begin{equation}
    \hat{\mathbf{g}} = \mathbf{g} \odot \mathbf{m},
\end{equation}
and the layer output is computed by aggregating the masked activations:
\begin{equation}
\mathbf{y} = \sum_{i=1}^N \hat{\mathbf{g}}_i \cdot \mathcal{E}_i(\mathbf{x}).
\end{equation}

This design provides three key advantages. First, it decouples routing and sparsification, {\em i.e.}, the primary router handles expert selection and load balancing, while the mask router focuses exclusively on redundancy elimination, avoiding conflicting optimization objectives. Second, expert sparsity is learned end-to-end without manual tuning, enabling aggressive expert reduction while preserving model capability. Third, the binary mask provides a hardware-friendly signal that can be directly leveraged by custom CUDA kernels, facilitating efficient real-world deployment.

\subsection{Training Strategy}
BEAM is trained end-to-end using two key components. The first is the Straight-Through Estimator (STE) to handle the non-differentiable binarization operation. The second is an auxiliary sparsity regularization loss added to the standard MoE objective to jointly optimize task performance, expert load balancing, and computational efficiency.

\subsubsection{Straight-Through Estimator}
The binary mask $\mathbf{m}$ is generated via a non-differentiable hard thresholding function as defined in Equation~\ref{eq:binary_threshold}. To enable the mask router to be trained via backpropagation, we adopt the STE method~\citep{bengio2013estimating} to approximate the gradient. Specifically, during the backward pass, the threshold function is treated as an identity mapping, allowing the gradient of the loss $\mathcal{L}$ with respect to $\mathbf{m}$ to be propagated directly to the raw mask $\hat{\mathbf{m}}$:
\begin{equation}
\frac{\partial \mathcal{L}}{\partial \hat{\mathbf{m}}} \approx \frac{\partial \mathcal{L}}{\partial \mathbf{m}}.
\end{equation}
This allows the mask router to be trained with backpropagation despite the discrete nature of $\mathbf{m}$. Note that all Top-K experts are computed regardless of $\mathbf{m}$ to ensure proper gradient flow during training.

To ensure stable training, we initialize the mask router parameters to zero. This yields $\hat{\mathbf{m}} = 0.5$ and $\mathbf{m} = 1$ for all experts at the start of training, which preserves the original Top-K behavior and allows sparsity to emerge gradually as training proceeds.

\subsubsection{Sparsity-Guided Optimization}
The total training loss combines three terms. In addition to the standard language modeling loss $\mathcal{L}_{lm}$ and the expert load-balancing loss $\mathcal{L}_{bal}$, we introduce an auxiliary sparsity regularization loss $\mathcal{L}_{reg}$, defined as the $L_1$ norm of the raw mask restricted to the Top-K candidate set $\mathcal{T}_K$:
\begin{equation}
\mathcal{L}_{reg} = \frac{1}{K} \sum_{i \in \mathcal{T}_K} |\hat{\mathbf{m}}_i|.     \end{equation}
$\mathcal{L}_{reg}$ directly encourages the mask router to suppress redundant experts among selected candidates without introducing extraneous gradients for non-selected experts.

The overall objective is a weighted sum:
\begin{equation}
\mathcal{L} = \mathcal{L}_{lm} + \alpha \mathcal{L}_{bal} + \beta \mathcal{L}_{reg},
\end{equation}
where $\alpha$ and $\beta$ are hyperparameters that control the balance between expert utilization and computational efficiency. Through this sparsity-guided optimization, BEAM learns to activate only the necessary experts for each token, achieving high inference speed without compromising performance.

\subsection{Theoretical Analysis}

We provide a theoretical analysis of BEAM’s training dynamics. The core operation is the masked routing weight $\hat{\mathbf{g}} = \mathbf{g} \odot \mathbf{m}$, where $\mathbf{g}$ is the output of the primary router and $\mathbf{m} = \mathbb{I}(\hat{\mathbf{m}} \geq 0.5)$ is the binary mask derived from $\hat{\mathbf{m}} = \sigma(\mathbf{a})$, with $\mathbf{a} = \mathbf{x} \mathbf{W}_m$ being the mask router pre-activation.

The mask router receives gradients from two sources: the task loss $\mathcal{L}_{lm}$ propagated through the masked routing weights $\hat{\mathbf{g}}$ via STE, and the sparsity regularization $\mathcal{L}_{reg}$ applied directly to the Top-K mask values. The load-balancing loss $\mathcal{L}_{bal}$ is computed solely from the primary router's weights $\mathbf{g}$ before masking and does not produce gradients for the mask router.

\begin{definition}[Gradient for Mask Router]
Under STE, the full gradient of $\mathcal{L}$ with respect to the mask router pre-activation $\mathbf{a}$ is:
\begin{equation}
    \left(\nabla_{\mathbf{a}} \mathcal{L}\right)_i
    = \left( \frac{\partial \mathcal{L}_{lm}}{\partial \hat{g}_i} \cdot \mathbf{g}_i + \frac{\beta}{K} \cdot \mathbf{1}_{[i \in \mathcal{T}_K]} \right) \sigma'(a_i),
\label{eq:ste_grad_beam}
\end{equation}
where $\mathcal{T}_K$ denotes the Top-K candidate set and $\mathbf{1}_{[i \in \mathcal{T}_K]}$ is its indicator function.
\end{definition}

\begin{theorem}[Selective Gradient Propagation]
\label{thm:selective_grad}
The gradient in Equation~\ref{eq:ste_grad_beam} satisfies:
\begin{align}
    &\mathbf{g}_i = 0 \quad \Rightarrow \quad \left(\nabla_{\mathbf{a}} \mathcal{L}\right)_i = 0, \\
    &\mathbf{g}_i > 0 \quad \Rightarrow \quad \left(\nabla_{\mathbf{a}} \mathcal{L}\right)_i = \left(\frac{\partial \mathcal{L}_{lm}}{\partial \hat{g}_i} \cdot \mathbf{g}_i + \frac{\beta}{K}\right) \sigma'(a_i).
\end{align}
\end{theorem}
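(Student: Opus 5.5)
The plan is to read both implications off Equation~\ref{eq:ste_grad_beam} by a case split on whether $i$ lies in the Top-K candidate set $\mathcal{T}_K$. The one structural fact needed is that, by Step~1, $\mathbf{g}_i = \mathrm{Softmax}(\mathrm{Top}\text{-}K(\mathbf{r}))_i$ satisfies $\mathbf{g}_i > 0$ if and only if $i \in \mathcal{T}_K$. This holds because the Top-K operator sets every non-selected logit to $-\infty$, whose exponential is $0$. Every retained logit is finite, so its exponential is strictly positive, and dividing by a positive normalizer keeps it positive. I will state this equivalence first, since it is what links the value of $\mathbf{g}_i$ to the indicator $\mathbf{1}_{[i \in \mathcal{T}_K]}$.

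\emph{Case $\mathbf{g}_i = 0$.} By the equivalence, $i \notin \mathcal{T}_K$, so the indicator term vanishes. The task term $\frac{\partial \mathcal{L}_{lm}}{\partial \hat g_i}\cdot \mathbf{g}_i$ also vanishes, because $\mathbf{g}_i=0$ and $\frac{\partial \mathcal{L}_{lm}}{\partial \hat g_i}$ is finite (the expert output $\mathcal{E}_i(\mathbf{x})$ and the downstream network are finite). The bracket is therefore $0$, and multiplying by the bounded factor $\sigma'(a_i)\in(0,1/4]$ gives $(\nabla_{\mathbf{a}}\mathcal{L})_i=0$.

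\emph{Case $\mathbf{g}_i>0$.} By the equivalence, $i\in\mathcal{T}_K$, so $\mathbf{1}_{[i\in\mathcal{T}_K]}=1$. Substituting into Equation~\ref{eq:ste_grad_beam} gives exactly the second claimed expression.

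The only step that needs care is the equivalence $\mathbf{g}_i>0 \iff i\in\mathcal{T}_K$, in particular the convention $e^{-\infty}=0$ and the finiteness of the retained logits. I will address it by interpreting the Top-K operator as a renormalized softmax over $\mathcal{T}_K$ that is zero outside it. If the routing logits contain ties at the $K$-th position, I will assume a fixed tie-breaking rule so that $\mathcal{T}_K$ is well defined; the equivalence then holds for that choice. The rest of the argument is direct substitution into Equation~\ref{eq:ste_grad_beam}.
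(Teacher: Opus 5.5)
Your proposal is correct and follows essentially the same case split as the paper's proof. If $\mathbf{g}_i = 0$, then $i \notin \mathcal{T}_K$, so both the task term and the indicator term vanish; if $\mathbf{g}_i > 0$, then $i \in \mathcal{T}_K$ and the formula follows by substitution. Your explicit justification of $\mathbf{g}_i > 0 \iff i \in \mathcal{T}_K$ (via $e^{-\infty}=0$, finite retained logits, and a fixed tie-breaking rule) makes rigorous a step the paper simply asserts.
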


\begin{proof}
Since $\hat{\mathbf{m}}_i = \sigma(a_i) \in (0,1)$, the L1 gradient simplifies to $\partial |\hat{\mathbf{m}}_i|/\partial \hat{\mathbf{m}}_i = 1$. Note that $\sigma'(a_i) > 0$ for all $a_i \in \mathbb{R}$.

\noindent \textbf{Case 1:} If $\mathbf{g}_i = 0$, then $i \notin \mathcal{T}_K$. The task-loss term vanishes because $\mathbf{g}_i = 0$, and the regularization term vanishes because $\mathcal{L}_{reg}$ is restricted to $\mathcal{T}_K$. Hence $\left(\nabla_{\mathbf{a}} \mathcal{L}\right)_i = 0$, and the mask router receives no learning signal for non-selected experts.

\noindent \textbf{Case 2:} If $\mathbf{g}_i > 0$, then $i \in \mathcal{T}_K$ and both terms contribute. The gradient direction is determined by the sign of $\frac{\partial \mathcal{L}_{lm}}{\partial \hat{g}_i} \cdot \mathbf{g}_i + \frac{\beta}{K}$: the task-loss term drives $a_i$ toward values that reduce $\mathcal{L}_{lm}$, while the constant $\frac{\beta}{K}$ consistently pushes $a_i$ downward to encourage sparsity. Expert $i$ is retained when its task contribution outweighs the sparsity pressure ($\frac{\partial \mathcal{L}_{lm}}{\partial \hat{g}_i} \cdot \mathbf{g}_i < -\frac{\beta}{K}$), and pruned otherwise. The hyperparameter $\beta$ directly controls this trade-off.

\end{proof}

Further analysis of full expert masking behaviour and details of the efficient BEAM implementation in vLLM are provided in Appendix~\ref{sec:append_full_mask_analysis} and Appendix~\ref{sec:append_cuda_kernel}, respectively.

\section{Experiments}

\vspace{-2mm}

\subsection{Experimental Setup}

\vspace{-2mm}

\noindent\textbf{Models and Training Data~}
We evaluate BEAM on three representative MoE models: Qwen1.5‑MoE‑A2.7B \citep{bai2023qwen}, DeepSeekV2‑Lite \citep{liu2024deepseek}, and Qwen3‑30B‑A3B \citep{yang2025qwen3}. We conduct supervised fine-tuning using the Tulu 3 SFT Mixture Dataset~\citep{lambert2024tulu}, which covers reasoning, coding, and general knowledge tasks. \rebuttal{All baselines and BEAM are fine-tuned on the same dataset with identical training configurations to ensure fair comparison.}

\noindent\textbf{Baselines~}
We compare against five methods:
(1) \rebuttal{\textbf{Top-K Reduced}} trains with a smaller Top-K.
(2) \textbf{Top-K Pruning} \rebuttal{trains with the original Top-K and reduces Top-K at inference.}
(3) \textbf{MoE-Dynamic}~\citep{huang2024harder} activates experts until cumulative routing probability exceeds threshold $\phi$.
(4) \textbf{AdaMoE}~\citep{zeng2024adamoe} adds null experts with zero computation.
\rebuttal{(5) \textbf{DynMoE}~\citep{guo2024dynamic} uses sigmoid router to adaptively determine activated experts.}

\noindent\textbf{Evaluation Benchmarks~}
For accuracy evaluation, we use eight benchmarks from OpenCompass~\citep{2023opencompass} across three domains: \texttt{\textbf{Reasoning}} (Math~\citep{hendrycksmath2021}, GSM8K~\citep{cobbe2021gsm8k}), HumanEval(H\_Eval)~\citep{chen2021evaluating}), \texttt{\textbf{Knowledge}} (MMLU~\citep{hendryckstest2021}, CEVAL~\citep{huang2023ceval}, CMMLU~\citep{li2023cmmlu}), and \texttt{\textbf{Common Sense}} (CommonsenseQA(CSQA)~\citep{talmor-etal-2019-commonsenseqa}, BoolQ~\citep{clark2019boolq}).

For acceleration evaluation, we report Time per Output Token (TPOT), Time to First Token (TTFT), and throughput under varying QPS using vLLM~\citep{kwon2023efficient}. All models run on a single NVIDIA H20 GPU with fixed input/output lengths of $128/32$ tokens and $5000$ test samples.

\noindent\textbf{Hyperparameters~}
\rebuttal{For MoE-Dynamic, AdaMoE, and BEAM, we tune their respective hyperparameters, {\em i.e.}, cumulative probability threshold $\phi$, null expert count, and L1 loss coefficient $\beta$, to match comparable sparsity levels with other methods at each setting.} All experiments are conducted on NVIDIA H20 GPUs under identical hyperparameter settings, as detailed in Appendix~\ref{sec:appendix_experiment_setup}.

\subsection{Performance Comparison}
\label{sec:perf_comparison}

\vspace{-4mm}

\begin{table}[!htbp]
  \centering
  \caption{Performance comparison on Qwen1.5-MoE-A2.7B under different sparsity levels. Best results within each sparsity group are marked in \textbf{bold}.}
  \label{tab:qwen1_5_performance}
  \resizebox{\linewidth}{!}{
  \begin{tabular}{l|c|ccc|ccc|cc|c}
  \toprule
  \rowcolor{gray!25}
   &
  & \multicolumn{3}{c|}{\textbf{Reasoning}}
  & \multicolumn{3}{c|}{\textbf{Knowledge}}
  & \multicolumn{2}{c|}{\textbf{CommonSense}}
  &   \\

  \noalign{\vspace{-0.85mm}}
  \cmidrule(lr){3-5}  \cmidrule(lr){6-8} \cmidrule(lr){9-10} \noalign{\vspace{-0.85mm}}
  \rowcolor{gray!25}
  \multirow{-2}{*}{\textbf{Methods \textbackslash Tasks}}  &
  \multirow{-2}{*}{\makecell[c]{\textbf{Avg. K} } }&
  \textbf{MATH} & \textbf{GSM8K} & \textbf{H\_Eval} & \textbf{MMLU} & \textbf{CEVAL} & \textbf{CMMLU} & \textbf{BoolQ} & \textbf{CSQA} & \multirow{-2}{*}{\makecell[c]{\textbf{Avg.} \\ (Acc. $\uparrow$)  } }  \\
  \midrule
  Qwen1.5-MoE$_{~\text{K=4}}$ & 4.00 & 23.04 & 57.47 & 50.61 & 59.28 & 74.15 & 75.18 & 72.63 & 81.33 & 61.71 \\
  \midrule
  \rowcolor{blue!5}
  \multicolumn{11}{c}{\textit{Mid Sparsity}} \\
  \rowcolor{gray!10}
  Top-K Pruning$_{~\text{K=2}}$ & 2.00 & 22.40 & 49.36 & 43.90 & 58.69 & \textbf{70.76} & 71.40 & 62.97 & 80.51 & 57.50 \\
  \rebuttal{Top-K Reduced}$_{~\text{K=2}}$ & 2.00 & 21.98 & 53.68 & 51.83 & \textbf{58.81} & 70.53 & \textbf{71.50} & 77.52 & 80.34 & 60.77 \\
  \rowcolor{gray!10}
  MoE-Dynamic$_{~\text{$\phi$=0.4}}$ & 2.20 & 20.94 & 51.86 & 47.56 & 57.85 & 67.35 & 67.84 & 73.82 & 80.34 & 58.45 \\
  AdaMoE$_{~\text{Null=60}}$ & 1.53 & 17.92 & 53.98 & 46.95 & 47.89 & 46.95 & 47.82 & 72.05 & 62.98 & 49.57 \\
  \rowcolor{gray!10}
  \textbf{BEAM}$_{~(\beta=0.01)}$ & 1.56 & \textbf{24.78} & \textbf{55.50} & \textbf{53.05} & 58.75 & 70.47 & 69.18 & \textbf{78.32} & \textbf{80.84} & \textbf{61.36} \\
  \midrule
  \rowcolor{blue!5}
  \multicolumn{11}{c}{\textit{High Sparsity}} \\
  \rowcolor{gray!10}
  Top-K Pruning$_{~\text{K=1}}$ & 1.00 & 9.78 & 34.12 & 25.61 & 53.84 & 60.62 & 59.46 & 52.45 & 74.12 & 46.25 \\
  \rebuttal{Top-K Reduced}$_{~\text{K=1}}$ & 1.00 & 18.82 & 49.20 & 41.46 & 53.97 & 60.70 & 60.78 & 72.26 & 76.41 & 54.20 \\
  \rowcolor{gray!10}
  MoE-Dynamic$_{~\text{$\phi$=0.2}}$ & 1.47 & 17.22 & 45.64 & 42.07 & 53.29 & 58.46 & 58.85 & \textbf{74.25} & 74.61 & 53.05 \\
  AdaMoE$_{~\text{Null=120}}$ & 1.26 & 15.76 & 47.38 & 42.32 & 46.55 & 41.11 & 43.57 & 64.71 & 64.37 & 45.72 \\
  \rowcolor{gray!10}
  \textbf{BEAM}$_{~(\beta=0.1)}$ & \textbf{0.56} & \textbf{23.54} & \textbf{55.04} & \textbf{49.39} & \textbf{58.05} & \textbf{70.22} & \textbf{67.52} & 72.69 & \textbf{79.77} & \textbf{59.53} \\
  \midrule
  \rowcolor{blue!5}
  \multicolumn{11}{c}{\textit{Extreme Sparsity}} \\
  \rowcolor{gray!10}
  \textbf{BEAM}$_{~(\beta=1.0)}$ & \textbf{0.11} & 18.72 & 51.18 & 42.07 & 54.17 & 58.97 & 57.15 & 69.11 & 69.94 & 52.66 \\
  \bottomrule
  \end{tabular}
  }
\end{table}

\begin{table}[!htbp]
    \centering
    \caption{Performance comparison on Qwen3-30B-A3B under different sparsity levels. Best results within each sparsity group are marked in \textbf{bold}.}
    \label{tab:qwen3_performance}
      \resizebox{\linewidth}{!}{
      \begin{tabular}{l|c|ccc|ccc|cc|c}
      \toprule
      \rowcolor{gray!25}
       &
      & \multicolumn{3}{c|}{\textbf{Reasoning}}
      & \multicolumn{3}{c|}{\textbf{Knowledge}}
      & \multicolumn{2}{c|}{\textbf{CommonSense}}
      &   \\

    \noalign{\vspace{-0.85mm}}
    \cmidrule(lr){3-5}  \cmidrule(lr){6-8} \cmidrule(lr){9-10} \noalign{\vspace{-0.85mm}}
    \rowcolor{gray!25}
    \multirow{-2}{*}{\textbf{Methods \textbackslash Tasks}}  &
    \multirow{-2}{*}{\makecell[c]{\textbf{Avg. K} } }&
    \textbf{MATH} & \textbf{GSM8K} & \textbf{H\_Eval} & \textbf{MMLU} & \textbf{CEVAL} & \textbf{CMMLU} & \textbf{BoolQ} & \textbf{CSQA} & \multirow{-2}{*}{\makecell[c]{\textbf{Avg.} \\ (Acc. $\uparrow$)  } }  \\
    \midrule
    Qwen3-30B-A3B$_{~\text{K=8}}$ & 8.00 & 58.28 & 88.02 & 82.93 & 81.80 & 83.56 & 83.69 & 86.76 & 86.24 & 81.41 \\
    \midrule
    \rowcolor{blue!5}
    \multicolumn{11}{c}{\textit{Mid Sparsity}} \\
    \rowcolor{gray!10}
    Top-K Pruning$_{~\text{K=4}}$ & 4.00 & 48.76 & 49.51 & 76.83 & 73.49 & 75.85 & 76.27 & 81.04 & 75.02 & 69.60 \\
    \rebuttal{Top-K Reduced}$_{~\text{K=4}}$ & 4.00 & \textbf{56.44} & 84.46 & 80.49 & 78.27 & 80.40 & 78.27 & 87.68 & 85.18 & 78.90 \\
    \rowcolor{gray!10}
    MoE-Dynamic$_{~\text{$\phi$=0.3}}$ & 5.04 & 54.28 & 82.03 & 76.22 & 77.86 & 78.17 & 78.93 & 87.22 & 85.18 & 77.49 \\
    AdaMoE$_{~\text{Null=128}}$ & 4.02 & 41.68 & 62.44 & 69.93 & 72.51 & 62.71 & 63.39 & 84.71 & 77.15 & 66.81 \\
    \rowcolor{gray!10}
    \textbf{BEAM}$_{~(\beta=0.01)}$ & 4.23 & 55.16 & \textbf{85.52} & \textbf{81.71} & \textbf{80.09} & \textbf{81.46} & \textbf{81.53} & \textbf{88.07} & \textbf{86.40} & \textbf{79.99} \\
    \midrule
    \rowcolor{blue!5}
    \multicolumn{11}{c}{\textit{High Sparsity}} \\
    \rowcolor{gray!10}
    Top-K Pruning$_{~\text{K=2}}$ & 2.00 & 0.68 & 1.14 & 0.00 & 25.39 & 17.51 & 16.42 & 17.31 & 16.87 & 11.92 \\
    \rebuttal{Top-K Reduced}$_{~\text{K=2}}$ & 2.00 & 44.60 & 77.79 & 72.56 & 72.99 & 72.63 & 72.99 & 83.85 & 80.02 & 72.18 \\
    \rowcolor{gray!10}
    MoE-Dynamic$_{~\text{$\phi$=0.1}}$ & 1.74 & 40.10 & 75.36 & 72.38 & 67.59 & 64.38 & 63.67 & 81.07 & 78.87 & 67.93 \\
    AdaMoE$_{~\text{Null=256}}$ & 2.64 & 34.06 & 73.01 & 58.54 & 44.50 & 38.75 & 37.76 & 61.04 & 60.44 & 51.01 \\
    \rowcolor{gray!10}
    \textbf{BEAM}$_{~(\beta=0.1)}$ & \textbf{1.23} & \textbf{55.44} & \textbf{85.90} & \textbf{81.10} & \textbf{76.06} & \textbf{74.04} & \textbf{74.54} & \textbf{85.75} & \textbf{84.28} & \textbf{77.14} \\
    \midrule
    \rowcolor{blue!5}
    \multicolumn{11}{c}{\textit{Extreme Sparsity}} \\
    \rowcolor{gray!10}
    \rebuttal{Top-K Reduced}$_{~\text{K=1}}$ & 1.00 & 27.96 & 63.08 & 51.22 & 58.26 & 52.97 & 53.05 & 56.21 & 68.88 & 53.95 \\
    \textbf{BEAM}$_{~(\beta=1.0)}$ & \textbf{0.56} & \textbf{52.20} & \textbf{81.96} & \textbf{77.44} & \textbf{69.44} & \textbf{66.28} & \textbf{69.36} & \textbf{78.47} & \textbf{80.10} & \textbf{71.91} \\
    \bottomrule
    \end{tabular}
    }
\end{table}

\begin{table}[!htbp]
    \centering
    \caption{Performance comparison on DeepSeekV2-Lite under different sparsity levels. Best results within each sparsity group are marked in \textbf{bold}.}
    \label{tab:deepseek_performance}
      \resizebox{\linewidth}{!}{
      \begin{tabular}{l|c|ccc|ccc|cc|c}
      \toprule
      \rowcolor{gray!25}
       &
      & \multicolumn{3}{c|}{\textbf{Reasoning}}
      & \multicolumn{3}{c|}{\textbf{Knowledge}}
      & \multicolumn{2}{c|}{\textbf{CommonSense}}
      &   \\

    \noalign{\vspace{-0.85mm}}
    \cmidrule(lr){3-5}  \cmidrule(lr){6-8} \cmidrule(lr){9-10} \noalign{\vspace{-0.85mm}}
    \rowcolor{gray!25}
    \multirow{-2}{*}{\textbf{Methods \textbackslash Tasks}}  &
    \multirow{-2}{*}{\makecell[c]{\textbf{Avg. K} } }&
    \textbf{MATH} & \textbf{GSM8K} & \textbf{H\_Eval} & \textbf{MMLU} & \textbf{CEVAL} & \textbf{CMMLU} & \textbf{BoolQ} & \textbf{CSQA} & \multirow{-2}{*}{\makecell[c]{\textbf{Avg.} \\ (Acc. $\uparrow$)  } }  \\
    \midrule
    DeepSeekV2-Lite$_{~\text{K=6}}$ & 6.00 & 20.02 & 62.70 & 43.90 & 55.04 & 55.26 & 60.93 & 75.20 & 68.14 & 55.15 \\
    \midrule
    \rowcolor{blue!5}
    \multicolumn{11}{c}{\textit{Mid Sparsity}} \\
    \rowcolor{gray!10}
    Top-K Pruning$_{~\text{K=4}}$ & 4.00 & 15.10 & 57.09 & 37.80 & 46.68 & 53.25 & 60.00 & 69.69 & 67.40 & 50.88 \\
    \rebuttal{Top-K Reduced}$_{~\text{K=4}}$ & 4.00 & 16.58 & 57.24 & 40.85 & 54.70 & 55.89 & 60.40 & 76.39 & \textbf{72.48} & 54.32 \\
    \rowcolor{gray!10}
    MoE-Dynamic$_{~\text{$\phi$=0.3}}$ & 4.31 & 19.08 & 35.63 & 38.35 & 42.95 & 53.36 & 58.12 & 63.55 & 70.60 & 47.70 \\
    AdaMoE$_{~\text{Null=64}}$ & 3.25 & 12.00 & 37.76 & 25.67 & 53.01 & \textbf{57.28} & 58.85 & 62.32 & 69.86 & 47.09 \\
    \rowcolor{gray!10}
    \textbf{BEAM}$_{~(\beta=0.01)}$ & 2.61 & \textbf{20.36} & \textbf{60.27} & \textbf{46.95} & \textbf{56.65} & 54.12 & \textbf{60.42} & \textbf{76.57} & 65.11 & \textbf{55.06} \\
    \midrule
    \rowcolor{blue!5}
    \multicolumn{11}{c}{\textit{High Sparsity}} \\
    \rowcolor{gray!10}
    Top-K Pruning$_{~\text{K=2}}$ & 2.00 & 13.38 & 46.02 & 28.66 & 43.25 & 51.06 & 36.91 & 58.90 & 58.64 & 42.10 \\
    \rebuttal{Top-K Reduced}$_{~\text{K=2}}$ & 2.00 & 15.18 & 51.10 & 34.76 & \textbf{51.65} & 53.35 & 60.39 & 68.87 & 66.83 & 50.27 \\
    \rowcolor{gray!10}
    MoE-Dynamic$_{~\text{$\phi$=0.1}}$ & 3.90 & 15.96 & 56.56 & 39.00 & 28.52 & 55.10 & 58.43 & 34.56 & \textbf{71.09} & 44.90 \\
    AdaMoE$_{~\text{Null=128}}$ & 2.11 & 9.24 & 28.96 & 20.73 & 39.70 & 54.83 & 52.29 & 54.50 & 70.27 & 41.31 \\
    \rowcolor{gray!10}
    \textbf{BEAM}$_{~(\beta=0.1)}$ & \textbf{1.08} & \textbf{17.18} & \textbf{59.21} & \textbf{43.29} & 48.08 & \textbf{56.15} & \textbf{60.67} & \textbf{71.07} & 70.93 & \textbf{53.32} \\
    \midrule
    \rowcolor{blue!5}
    \multicolumn{11}{c}{\textit{Extreme Sparsity}} \\
    \rowcolor{gray!10}
    \rebuttal{Top-K Reduced}$_{~\text{K=1}}$ & 1.00 & 7.90 & 31.99 & 25.00 & 28.75 & 41.46 & 45.01 & 53.14 & 52.91 & 35.77 \\
    \textbf{BEAM}$_{~(\beta=1.0)}$ & \textbf{0.48} & \textbf{11.72} & \textbf{45.19} & \textbf{42.07} & \textbf{38.33} & \textbf{50.11} & \textbf{54.48} & \textbf{69.66} & \textbf{67.57} & \textbf{47.39} \\
    \bottomrule
    \end{tabular}
    }
\end{table}

Table~\ref{tab:qwen1_5_performance}, Table~\ref{tab:qwen3_performance}, and Table~\ref{tab:deepseek_performance} summarize the performance and sparsity results of BEAM and baselines across multiple MoE models\rebuttal{, organized by mid, high, and extreme sparsity levels.} \rebuttal{We report} average activated experts per token (Avg-K) and downstream task scores. \rebuttal{Comparisons with DynMoE are provided in Appendix~\ref{sec:more_baseline}.}

\noindent\textbf{BEAM achieves extreme sparsity with minimal performance loss.~}
\rebuttal{BEAM consistently preserves over \textbf{98\%} of original accuracy at mid sparsity across all three models while reducing Avg-K by 47\%--61\%. At high sparsity, Avg-K drops to as low as \textbf{14\%} of the original (e.g., $0.56/4$ on Qwen1.5) with over 95\% accuracy retained.
The advantage of BEAM is most evident under extreme sparsity. On DeepSeekV2, BEAM ($K=0.48$) outperforms Top-K Reduced ($K=1$) by \textbf{32.49\%}, while on Qwen3 the margin reaches \textbf{33.29\%}.
On Qwen1.5, BEAM reaches Avg-K $=0.11$, indicating that most tokens completely bypass routed experts, while still retaining 85\% of the original performance, which demonstrates effective token-adaptive redundancy removal.
}

\noindent\textbf{Existing dynamic routing methods underperform in post-training settings.~}
\rebuttal{Top-K Pruning degrades sharply at higher sparsity, while Top-K Reduced is more stable but its fixed per-token budget consistently underperforms BEAM. Even at extreme sparsity, BEAM with fewer average experts outperforms Top-K Reduced ($K=1$) on both Qwen3 and DeepSeek.
MoE-Dynamic and AdaMoE also fall short: the former requires model-specific threshold tuning without competitive trade-offs, while the latter suffers from performance degradation due to null-expert interference.
BEAM avoids these issues by decoupling sparsification from expert selection via a lightweight mask router, enabling stable training and preserving the original expert load balance (Appendix~\ref{sec:expert_load_balance_analysis}).}

\rebuttal{\noindent\textbf{$\beta$ provides smooth control over the sparsity--accuracy trade-off.~}
Increasing $\beta$ consistently improves sparsity with gradual accuracy loss (Tables~\ref{tab:qwen1_5_performance}--\ref{tab:deepseek_performance}), making it straightforward to adapt the method to deployment constraints via a single parameter. At $\beta = 0.1$, BEAM preserves over 95\% accuracy across all models, offering a good trade-off.}

\subsection{Acceleration Comparison}

We evaluate inference acceleration under both online and offline settings. In the online setting, models are deployed as services and we measure TTFT and TPOT across varying QPS to simulate real-world serving. In the offline setting, we use a large fixed batch size to maximize GPU utilization and report throughput, reflecting scenarios like large-scale LLM knowledge distillation. For fair comparison with performance-efficiency tradeoff, we tested the inference speed of all baseline methods and BEAM under \textit{High Sparsity}.

As shown in Figure~\ref{fig:model_accelerate_compare}, BEAM achieves consistent speedups across all models and settings. It achieves at least $1.3\times$ improvement in TPOT and over $1.1\times$ gains in both TTFT and throughput. Notably, on DeepSeek-V2-Lite at QPS=24, BEAM reaches up to $2.5\times$ decoding acceleration. 
The achievable speedup is limited by model architecture. For example, Qwen1.5-MoE-A2.7B contains 4 shared experts out of 8 total, limiting their MoE layer FLOPs reduction to at most 50\%. In contrast, Qwen3-30B-A3B has no shared experts, enabling an 85\% FLOPs reduction and substantially higher throughput gains.
In comparison, MoE-Dynamic and AdaMoE achieve limited sparsity and introduce extra overhead, yielding negligible or no acceleration benefits.

\begin{figure}[!htbp]
    \centering
    \includegraphics[width=\linewidth]{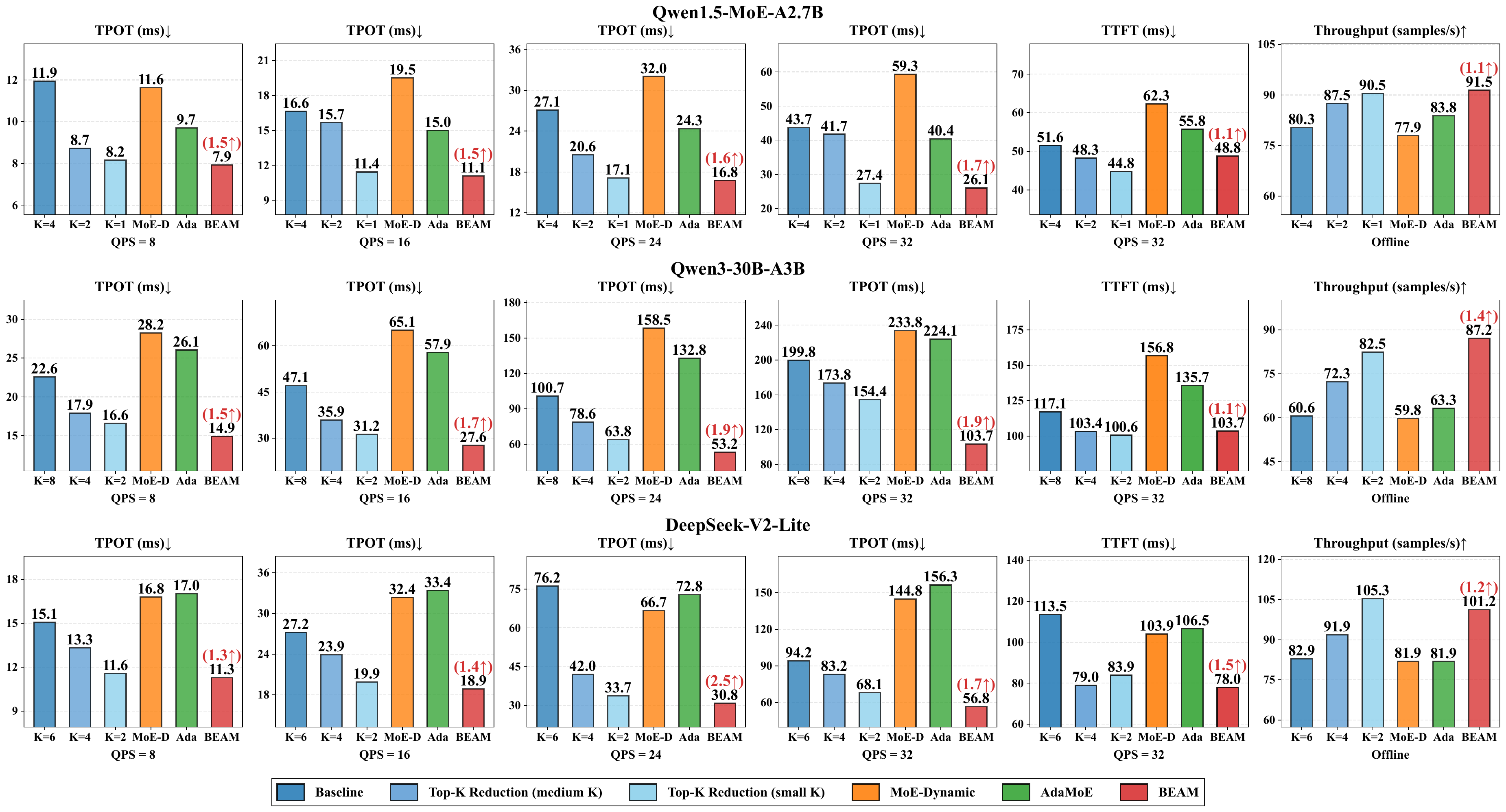}
    \caption{Comparison of TPOT, TTFT, and throughput across different methods.}
    \vspace{-4mm}
    \label{fig:model_accelerate_compare}
\end{figure}



\subsection{Ablation Study}
\vspace{-4mm}


\begin{table}[!htbp]
\centering
\begin{minipage}[t]{0.48\linewidth}
  \centering
  \captionsetup{font=small}
  \setlength{\tabcolsep}{5pt}
  \captionof{table}{Binary threshold $\tau$ ablation on Qwen1.5.}
  \label{tab:ablation_sigmoid}
  \renewcommand{\arraystretch}{1.2}
  \resizebox{\linewidth}{!}{
  \begin{tabular}{cccccc}
  \toprule
  \rowcolor{gray!25}
  \textbf{Threshold}
  & \textbf{Avg. K}
  & \textbf{Reason.}
  & \textbf{Know.}
  & \textbf{Common.} & \textbf{Avg. $\uparrow$} \\
  \midrule
  $\tau=0.1$ & 0.78 & 43.91 & \textbf{66.80} & 66.43 & 58.12 \\
  \rowcolor{gray!10}
  $\tau=0.3$ & 0.70 & \textbf{44.41} & 66.37 & 70.82 & 59.25 \\
  $\tau=0.5$ & 0.56 & 43.01 & 65.12 & \textbf{76.23} & \textbf{59.61} \\
  \rowcolor{gray!10}
  $\tau=0.7$ & 0.42 & 41.92 & 62.23 & 69.72 & 56.49 \\
  $\tau=0.9$ & \textbf{0.28} & 41.09 & 60.00 & 59.26 & 52.72 \\
  \bottomrule
  \end{tabular}
  }
\end{minipage}
\hfill
\begin{minipage}[t]{0.48\linewidth}
  \centering
  \captionsetup{font=small}
  \setlength{\tabcolsep}{3.5pt}
  \captionof{table}{Training configuration ablation on Qwen3.}
  \label{tab:ablation_training}
  \renewcommand{\arraystretch}{1.4}
  \resizebox{\linewidth}{!}{
  \begin{tabular}{lccccc}
  \toprule
  \rowcolor{gray!25}
  \textbf{Configs.}
  & \textbf{Avg. K}
  & \textbf{Reason.}
  & \textbf{Know.}
  & \textbf{Common.} & \textbf{Avg. $\uparrow$} \\
  \midrule
  \textbf{BEAM} & 1.23 & 74.15 & 74.88 & 85.02 & 77.14 \\
  \rowcolor{gray!10}
  \textbf{~- w/o. $\mathcal{L}_{reg}$} & 6.31 & 75.42 & 76.55 & 83.23 & 77.80~\textcolor{darkgreen}{(0.9\%$\uparrow$)} \\
  \textbf{~- $\mathcal{L}_1$ to $\mathcal{L}_2$} & 2.01 & 71.34 & 73.21 & 84.29 & 75.28~\textcolor{red}{(2.4\%$\downarrow$)} \\
  \rowcolor{gray!10}
  \textbf{~- Soft } & 1.34 & 12.70 & 21.94 & 42.30 & 23.56~\textcolor{red}{(69.5\%$\downarrow$)} \\
  \rebuttal{\textbf{~- Soft w/. Temp.}} & \rebuttal{1.78} & \rebuttal{65.30} & \rebuttal{72.35} & \rebuttal{82.29} & \rebuttal{73.31}~\textcolor{red}{(5.0\%$\downarrow$)} \\
  \bottomrule
  \end{tabular}
  }
\end{minipage}
\end{table}

\noindent\textbf{Ablation on Binary Threshold ~} 
We evaluate binarization thresholds $\tau \in \{0.1, 0.3, 0.5, 0.7, 0.9\}$ on Qwen1.5-MoE-A2.7B, as shown in Table~\ref{tab:ablation_sigmoid}.
Increasing $\tau$ monotonically reduces Avg-K and thus increases sparsity. 
We find that $\tau = 0.5$ achieves the best overall performance, largely driven by stronger commonsense results. A plausible explanation is that $\tau = 0.5$ offers the greatest gradient sensitivity around the decision boundary while maintaining a stable Top-K initialization.
Based on this result, we fix $\tau = 0.5$ and vary only the regularization coefficient $\beta$ to control sparsity.

\noindent\textbf{Ablation on Training Approach ~}
We evaluate several training variants on Qwen3-30B-A3B, including removing $\mathcal{L}_{\text{reg}}$, replacing L1 with L2 regularization, and replacing STE-based binary masking with soft-mask training. For the latter, we consider both plain sigmoid gating (Soft) and a temperature-scaled sigmoid that gradually sharpens the mask (Soft w/. Temp.). As shown in Table~\ref{tab:ablation_training}, removing $\mathcal{L}_{\text{reg}}$ slightly improves reasoning performance but substantially increases expert activation. L2 regularization is inferior to L1 in both sparsity and accuracy. Both soft-mask variants also underperform binary-mask training, where plain sigmoid gating fails severely because of the train-inference mismatch, while temperature scaling only partially mitigates this issue. Overall, the results support the use of $\mathcal{L}_{\text{reg}}$, L1 regularization, and STE-based binary masking in BEAM.

\section{Analysis}
\label{sec:analysis}
\vspace{-2mm}

\subsection{Token-wise Sparsity Analysis}

\begin{figure}[!htp]
    \vspace{-3mm}
    \centering
    \includegraphics[width=\linewidth]{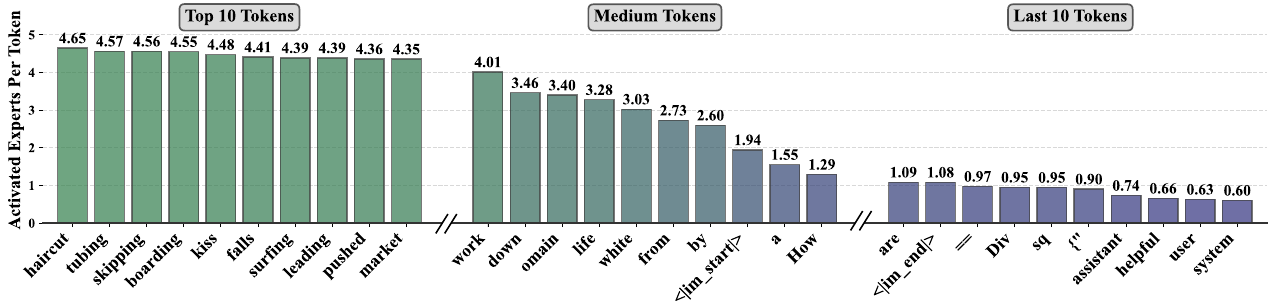}
    \caption{The average number of activated experts per token in BEAM: Qwen3-30B-A3B.}
    \label{fig:avg_exp_per_token}
    \vspace{-2mm}
\end{figure}

To understand how BEAM adapts computation per token, we visualize the average number of activated experts across tokens, as shown in Figure~\ref{fig:avg_exp_per_token}.
We obtain several key findings.
\textbf{1. Expert activation varies across tokens.} The most demanding tokens activate up to $4.65$ experts on average, versus only $0.6$ for the least demanding.
\textbf{2. Activation aligns with semantic richness.} Content words (e.g., nouns, verbs) consistently trigger more experts than function words (e.g., prepositions) and punctuation.
\textbf{3. Chat template tokens are highly redundant.} Fixed prompts like ``You are a helpful assistant'' activate few experts yet maintain performance, suggesting minimal informational value.
These findings show that BEAM dynamically allocates computation based on token informativeness.

\vspace{-3mm}
\subsection{\rebuttal{Layer-wise and Position-wise Analysis}}
\label{sec:position_masking}

\vspace{-2mm}

\begin{figure}[!h]
\centering
\begin{subfigure}[t]{0.42\linewidth}
    \centering
    \includegraphics[width=\linewidth]{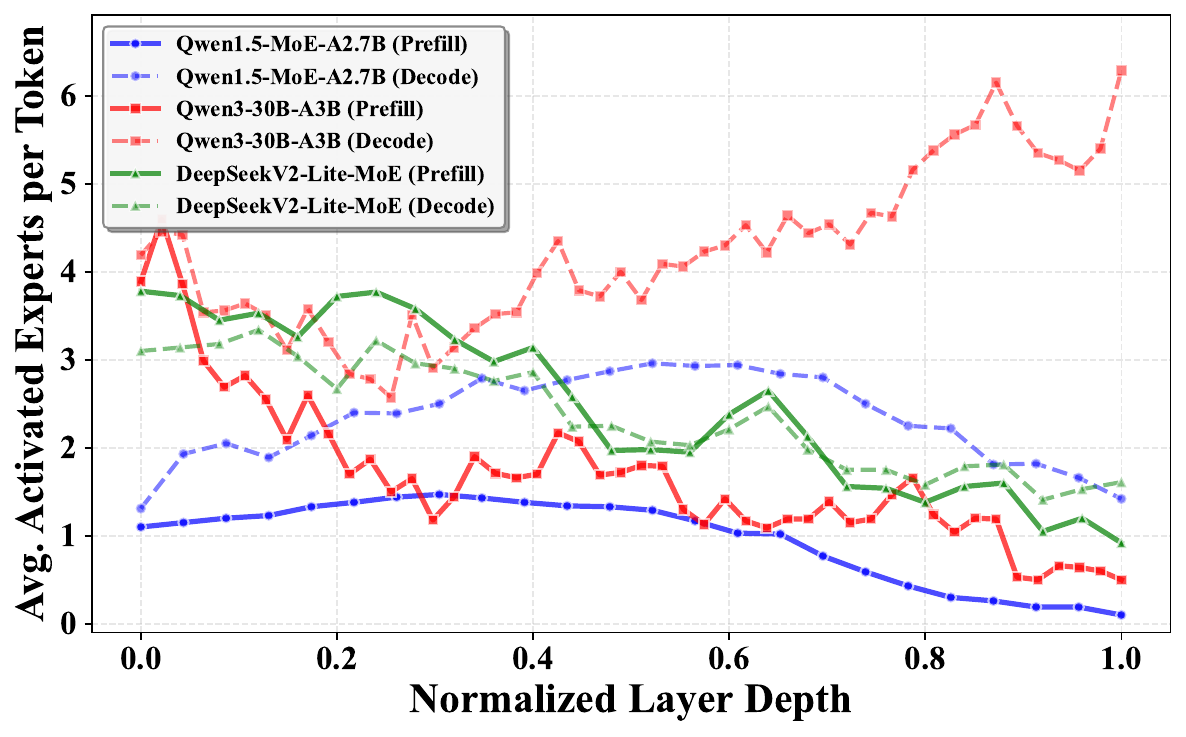}
    \vspace{-5mm}
    \caption{Layer-wise activated experts.}
    \label{fig:layer_sparsity}
\end{subfigure}
\hfill
\begin{subfigure}[t]{0.56\linewidth}
    \centering
    \rebuttal{\includegraphics[width=\linewidth]{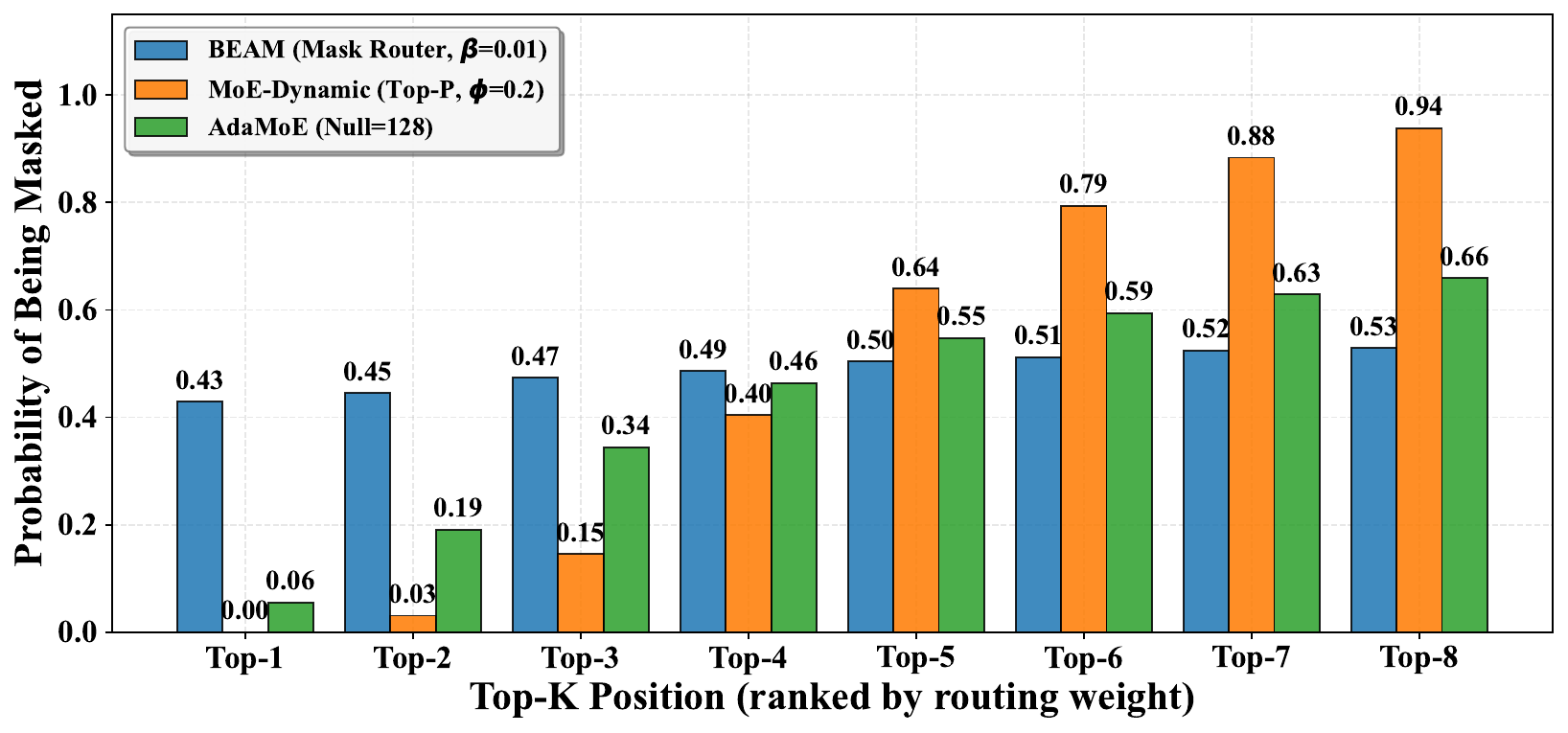}}
    \vspace{-5mm}
    \caption{\rebuttal{Position-wise masking probability.}}
    \label{fig:topk_wise_sparsity}
\end{subfigure}
\vspace{-1mm}
\caption{\rebuttal{Layer-wise sparsity and position-wise masking analysis.}}
\vspace{-3mm}
\label{fig:analysis_combined}
\end{figure}

We measure the average number of activated experts per layer during prefill and decode on 1,000 randomly sampled inputs (Figure~\ref{fig:layer_sparsity}). DeepSeek exhibits nearly identical expert usage in both phases, while Qwen1.5 and Qwen3 consistently use more experts during decoding. 
These models also develop an encoder-decoder-like pattern: shallower layers primarily support knowledge storage, while deeper layers allocate more expert capacity to decoding and reasoning, meaning that BEAM adapts layer-wise sparsity to functional roles.
\rebuttal{We further compare the per-position masking probability of BEAM, MoE-Dynamic, and AdaMoE on Qwen3 Under similar sparsity conditions. (Figure~\ref{fig:topk_wise_sparsity}). MoE-Dynamic exhibits strong position bias, never masking Top-1 (0.00) and applying extremely high masking beyond Top-6 (0.79$\sim$0.94). AdaMoE shows a monotonic increase from Top-1 (0.06) to Top-8 (0.66), indicating moderate but still rank-dependent bias. In contrast, BEAM shows only a mild increase from Top-1 (0.43) to Top-8 (0.53), demonstrating that it evaluates expert relevance based on token-specific features rather than routing rank. Another layer-wise rank masking analysis is provided in Appendix~\ref{sec:layerwise_rank}.}

We also provide task-specific acceleration analysis, expert load balancing analysis, and token-layer sparsity visualizations. Details can be found in Appendix~\ref{sec:task_specific_analysis}, ~\ref{sec:expert_load_balance_analysis}, and~\ref{sec:token_layer_sparsity_visualization}.

\vspace{-4mm}

\section{Conclusion}

\vspace{-3mm}

\rebuttal{We propose BEAM, a plug-and-play dynamic routing framework that introduces a lightweight mask router to selectively deactivate redundant experts within the Top-K set, enabling token-adaptive sparsity without modifying the model architecture. Integrated into vLLM via an efficient CUDA kernel, BEAM delivers up to 2.5$\times$ faster decoding and 1.4$\times$ higher throughput with over 98\% accuracy retention, demonstrating that decoupling sparsity control from routing enables stable and practical MoE inference acceleration.}




\bibliographystyle{plainnat}
\bibliography{example_paper}


\newpage

\newpage
\appendix

\section{Appendix on Method}

\subsection{Impact Statements}

This work focuses on improving the computational efficiency of Mixture-of-Experts models. We do not identify any societal impacts specific to the proposed method beyond those already associated with the general use and deployment of large language models.

\subsection{Limitations}
\label{sec:limitations}

\rebuttal{Our work has several limitations.
First, BEAM is evaluated on three MoE architectures; its effectiveness on other MoE designs (e.g., with different gating mechanisms or expert granularities) remains to be validated.
Second, BEAM requires a post-training SFT phase to learn the mask router, which incurs additional training cost proportional to the model size.
Third, the achievable inference speedup depends on the model's shared-expert ratio. For example, architectures with a large proportion of shared experts may benefit less, as shared-expert computation cannot be reduced by BEAM.
Finally, our acceleration benchmarks are conducted on single-GPU settings, and the interaction between BEAM's dynamic sparsity and multi-GPU expert parallelism strategies needs further investigation.}

\subsection{Behaviors under Zero Activation}
\label{sec:append_full_mask_analysis}

\rebuttal{Since BEAM permits between $0$ and $K$ activated experts, the zero-activation case requires special consideration. In a modern Transformer MoE block, the hidden state update follows:
\begin{equation}
\mathbf{h}' = \mathbf{h}
+ \sum\nolimits_{i=1}^{N} \hat{\mathbf{g}}_i \,\mathcal{E}_i(\mathcal{N}(\mathbf{h}))
+ \delta_{\mathrm{sh}}\,\mathbf{g}_{\mathrm{sh}} \,\mathcal{E}_{\mathrm{sh}}(\mathcal{N}(\mathbf{h})),
\end{equation}
where $\mathcal{N}(\cdot)$ denotes the normalization function, $\mathcal{E}_i(\cdot)$ denotes the $i$-th routed expert, $\hat{\mathbf{g}}_i \in \mathbb{R}$ denotes the normalized routing weight assigned to expert $i$, $\mathcal{E}_{\mathrm{sh}}(\cdot)$ denotes the shared expert, $\mathbf{g}_{\mathrm{sh}}$ denotes the routing weight assigned to the shared expert, and $\delta_{\mathrm{sh}} \in \{0,1\}$ is an indicator variable specifying whether a shared expert is present. 
When all routed experts are skipped, {\em i.e.}, $\hat{\mathbf{g}}_i = 0$ for all $i \in \{1,\dots,N\}$, then the update becomes:
\begin{equation}
\mathbf{h}' = \mathbf{h}
+ \delta_{\mathrm{sh}}\,\mathbf{g}_{\mathrm{sh}} \,\mathcal{E}_{\mathrm{sh}}(\mathcal{N}(\mathbf{h})).
\end{equation}
Therefore, if $\delta_{\mathrm{sh}}=1$, as in architectures with shared experts such as Qwen1.5-MoE and DeepSeek, the layer reduces to shared-expert-only computation. If $\delta_{\mathrm{sh}}=0$, as in architectures without shared experts such as Qwen3-MoE, the token bypasses the entire MoE layer through the residual path. 
}

\rebuttal{
Under zero activation, BEAM degenerates into a form of dynamic layer skipping, which has also been studied in prior work as an efficient inference acceleration mechanism~\citep{yang2025dash,lawson2025learning,amer2026conflayers}. Empirically, as shown in Section~\ref{sec:perf_comparison}, BEAM maintains strong model performance even when the average number of activated experts is below $1$, implying that zero-activation cases occur frequently in practice. This observation suggests that substantial layer computation in LLMs is redundant. Moreover, the analysis in Section~\ref{sec:analysis} shows that zero activation arises more often in deeper layers during prefill and for tokens with limited semantic content.
}

\subsection{Key Modifications for BEAM}
\label{sec:append_cuda_kernel}

We implement BEAM in vLLM by minimally extending its standard MoE CUDA pipeline with two kernel-level changes. First, \texttt{mask\_route\_kernel} writes the expert index as $-1$ whenever the corresponding mask logit is non-positive. Second, \texttt{moe\_align\_block\_size\_kernel} ignores all $-1$ entries during expert-wise token grouping and block alignment, thereby removing masked experts from subsequent computation. This modification is lightweight, preserves compatibility with vLLM’s existing optimizations such as operator fusion and memory coalescing, and introduces negligible integration overhead. The core code is provided below, and the full implementation will be released upon acceptance.


\begin{lstlisting}[language=C++]
    template<typename scalar_t>
    __global__ void mask_route_kernel(
        const int64_t* __restrict__ topk_ids,        // Input: Top-K expert indices
        const scalar_t* __restrict__ mask_logits,    // Input: Expert mask logits
        int64_t* __restrict__ output_ids,            // Output: Masked expert IDs
        const int num_tokens, const int top_k, const int num_experts) {
    
        int idx = blockIdx.x * blockDim.x + threadIdx.x;
        if (idx >= num_tokens * top_k) return;
        int token_idx = idx / top_k;
        int slot_idx = idx % top_k;
        int input_idx = token_idx * top_k + slot_idx;
        int64_t original_expert = topk_ids[input_idx];
    
        if (token_idx >= num_tokens || slot_idx >= top_k ||
            original_expert < 0 || original_expert >= num_experts) {
            output_ids[input_idx] = -1;
            return;
        }
    
        // Apply binary mask: keep if mask_logits > 0, else set to -1
        int expert_idx = token_idx * num_experts + original_expert;
        scalar_t logit = mask_logits[expert_idx];
        output_ids[input_idx] = (logit > 0) ? original_expert : -1;
    }
    
    template <typename scalar_t, typename token_cnts_t>
    __global__ void moe_align_block_size_kernel(/* ... */) {
        // ... thread setup ...
        for (int i = start_idx; i < end_idx; ++i) {
            int64_t expert_id = topk_ids[i];
            if (expert_id != -1) {  // Skip masked experts
                ++tokens_cnts[index(num_experts, threadIdx.x + 1, expert_id)];
            }
        }
        // ... rest of alignment logic ...
    }
\end{lstlisting}

\section{Appendix on Experiment}

\subsection{Experimental Setup}
\label{sec:appendix_experiment_setup}

\subsubsection{Models}

\begin{table}[!htp]
    \centering
    \caption{Main hyperparameters for each model.}
    \resizebox{\linewidth}{!}{
    \begin{tabular}{lccc}
        \toprule
        \rowcolor{gray!25}
        \textbf{Model Config} & \textbf{Qwen1.5-MoE-A2.7B} & \textbf{DeepSeekV2-Lite} & \textbf{Qwen3-30B-A3B} \\
        \midrule
        Total Params (B) & 14.3 & 16 & 30 \\
        Activated Params (B) & 2.7 & 2.4 & 3 \\
        MoE Layers / Total Layers & 24/24 & 26/27 & 48/48 \\
        Experts per MoE Layer & 60 & 64 & 128 \\
        Activated Experts per Token & 4 (selected) + 4 (shared) & 6 (selected) + 2 (shared) & 8 \\
        hidden size & 2560 & 2048 & 2048 \\
        intermediate size & 5632 & 10944 & 6144 \\
        Vocabulary Size & 151936 & 102400 & 151936 \\
        \bottomrule
        \toprule
        \rowcolor{gray!25}
        \textbf{Inference Setting} & \textbf{Qwen1.5-MoE-A2.7B} & \textbf{DeepSeekV2-Lite} & \textbf{Qwen3-30B-A3B} \\
        \midrule
        Temperature & 0.7 & 0.3 & 0.7 \\
        Top-$p$ & 0.8 & 0.95 & 0.8 \\
        Top-$k$ & 20 & 50 & 20 \\
        Repetition Penalty & 1.05 & 1.00 & 1.00 \\
        Max Output Tokens & 1024 & 1024 & 2048 \\
        Batch Size & 16 & 16 & 16 \\
        \bottomrule
        \toprule
        \rowcolor{gray!25}
        \textbf{Training Setting} & \textbf{Qwen1.5-MoE-A2.7B} & \textbf{DeepSeekV2-Lite} & \textbf{Qwen3-30B-A3B} \\
        \midrule
        Learning Rate & $5\times 10^{-5}$ & $5\times 10^{-5}$ & $5\times 10^{-5}$ \\
        Learning Rate Schedule & Linear & Linear & Linear \\
        Load Balancing Loss Coefficient ($\alpha$) & $1\times 10^{-3}$ & $1\times 10^{-3}$ & $1\times 10^{-3}$ \\
        Per Device Batch Size & 32 & 32 & 32 \\
        Number of GPUs & 32 & 32 & 64 \\
        Max Token Length & 4096 & 4096 & 4096 \\
        Warm up ratio & 0.03 & 0.03 & 0.03 \\
        Number of Epochs & 2 & 2 & 2 \\

        \bottomrule
    \end{tabular}
    }
    \label{tab:moe_model_and_infer_settings}
\end{table}

We do experiments on three representative MoE models: Qwen1.5‑MoE‑A2.7B \citep{bai2023qwen}, DeepSeekV2‑Lite \citep{liu2024deepseek}, and Qwen3‑30B‑A3B \citep{yang2025qwen3}.
\begin{itemize}
    \item \textbf{Qwen1.5-MoE-A2.7B:} Each token activates $4$ shared experts and $4$ routed experts (out of $60$) in each layer.
    \item \textbf{DeepSeekV2-Lite:} Each token activates $2$ shared experts and $6$ routed experts (out of $64$) in each layer.
    \item \textbf{Qwen3-30B-A3B:} Each token activates $8$ routed experts (out of $128$) in each layer.
\end{itemize}

More details can be found in Table~\ref{tab:moe_model_and_infer_settings}.

\subsubsection{Hyper-Parameters}

Tables~\ref{tab:moe_model_and_infer_settings} summarize the main configurations for all MoE models studied in this work. 
All trainings and evaluations are performed on NVIDIA H20 GPUs

\subsubsection{Benchmarks}

\begin{table}[!h]
    \centering
    \caption{Overview of OpenCompass tasks used for evaluation.}
    \small
    \begin{tabular}{p{2.3cm} p{2.5cm} p{7.5cm}}
        \toprule
        \textbf{Task} & \textbf{Domain/Format} & \textbf{Description / Example} \\
        \midrule
        \textbf{Math} \citep{hendrycksmath2021} & Reasoning / Open-Ended & A dataset of high school-level mathematical problems requiring step-by-step solutions. \newline \\
        & & \textit{Example:} A positive multiple of 45 less than 1000 is randomly selected. What is the probability that it is a two-digit integer? Express your answer as a common fraction.\\
        \midrule
        \textbf{GSM8K} \citep{cobbe2021gsm8k} & Reasoning / Open-Ended & Grade school math word problems with a focus on multi-step reasoning. \newline \\
        
        & & \textit{Example:} Shiloh is 44 years old today.  In 7 years, he will be three times as old as his nephew.  How old is his nephew today?\\
        \midrule
        \textbf{HumanEval} \citep{chen2021evaluating} & Reasoning / Open-Ended & Python programming problems requiring function implementation based on a natural language description.  \newline \\
        & & \textit{Example:} Write a function that returns the sum of two numbers.\\
        \midrule
    
        \textbf{MMLU} \citep{li2023cmmlu} & Knowledge / Multiple-Choice & A massive multitask test consisting of multiple-choice questions from various branches of knowledge. \newline \\
        & & \textit{Example:} Who set the world record for the mile race in 1886? A. R Bannister, B. S Coe, C. J DiMaggio, D. WG George \\
        \midrule
        
        \textbf{CEVAL} \citep{huang2023ceval} & Knowledge / Multiple-Choice & A comprehensive Chinese evaluation suite for foundation models.\newline \\
        & & \textit{Example:} \begin{CJK}{UTF8}{gbsn}下列各项中，应征收资源税的是\_\_\_\_\_。	A. 人造石油 B. 某商贸企业零售的煤炭 C. 开采铁矿石同时开采的锰矿 D. 某联合企业进口的石油  \end{CJK} \\
        \midrule

        \textbf{CMMLU} \citep{li2023cmmlu} & Knowledge / Multiple-Choice & A comprehensive Chinese multi-subject exam benchmark with 57 subjects.\newline \\
        & & \textit{Example:}  
        \begin{CJK}{UTF8}{gbsn}关系数据库中数据的逻辑结构是（A）树结构（B）维度表（C）层次结构（D）形状结构
        \end{CJK}  
        \\
        \midrule
        
        \textbf{BoolQ} \citep{clark2019boolq} & CommonSense / Multiple-Choice (Yes/No) & Reading comprehension questions with yes/no answers based on a passage.\newline \\
        & & \textit{Example:} Property tax -- Property tax or `house tax' is a local tax ... Is house tax and property tax are same?\\
        \midrule
        \textbf{CommonSenseQA} \citep{talmor-etal-2019-commonsenseqa} & CommonSense / Multiple-Choice & A new multiple-choice question answering dataset that requires different types of commonsense knowledge to predict the correct answers . \newline\\
        & & \textit{Example:} Sammy wanted to go to where the people were. Where might he go? A. race track, B. populated areas, C. the desert, D. apartment, E. roadblock."
        \\
        \bottomrule
    \end{tabular}
    \label{tab:opencompass_tasks}
\end{table}

For accuracy comparison, we select a diverse set of tasks from the OpenCompass~\citep{2023opencompass} benchmark, covering multiple domains: \texttt{\textbf{Reasoning}} (MATH~\citep{hendrycksmath2021}, GSM8K~\citep{cobbe2021gsm8k}, and Human Eval~\citep{chen2021evaluating}); \texttt{\textbf{Knowledge}} (MMLU~\citep{hendryckstest2021}, CEVAL~\citep{huang2023ceval}, and CMMLU~\citep{li2023cmmlu}); and \texttt{\textbf{CommonSense}} (CommonsenseQA~\citep{talmor-etal-2019-commonsenseqa}, BoolQ~\citep{clark2019boolq} ). Details and examples of these tasks are provided in Table~\ref{tab:opencompass_tasks}.

For acceleration comparison, we use vLLM~\citep{kwon2023efficient} as the inference framework. Each model is deployed on a single GPU, and we record the \textit{Time per Output Token} (TPOT, in ms) across different \textit{Queries per Second}~(QPS), the \textit{Time To First Token}~(TTFT, in ms) in 32 QPS (high-computing scenarios), and the offline \textit{Throughput}~(samples/s). The input and output sequence lengths are fixed at $128$ and $32$ tokens, respectively, and each test processes a total of 5,000 samples.

\rebuttal{\subsection{Training Dynamics}}
\label{sec:training_dynamics}

\begin{figure}[!h]
    \centering
    \includegraphics[width=\linewidth]{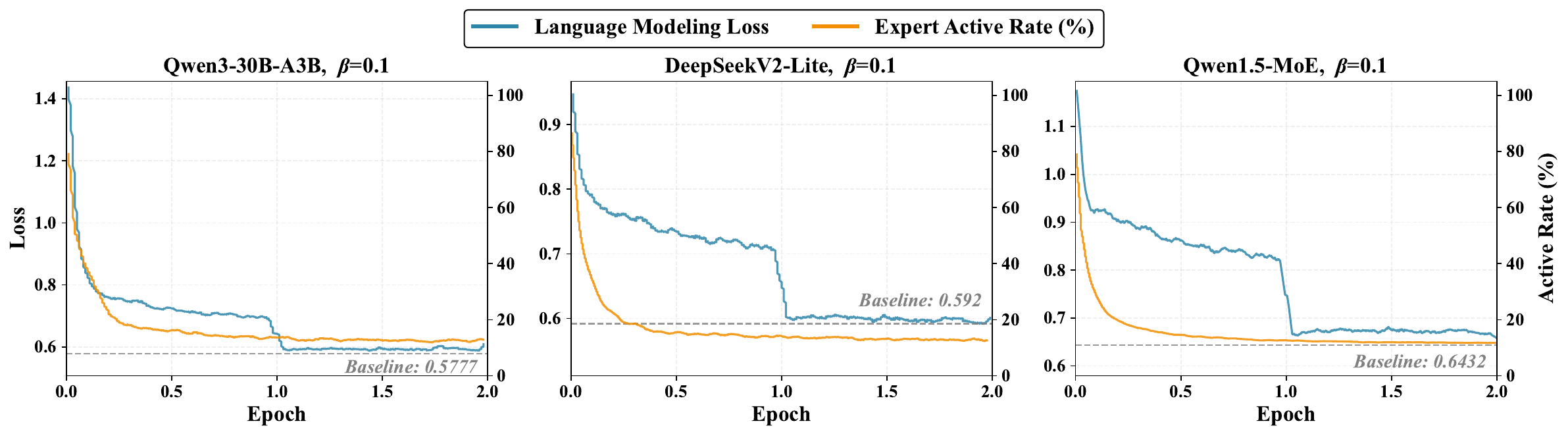}
    \caption{\rebuttal{Training curves of BEAM ($\beta=0.1$) on three MoEs. Blue: language modeling loss (left axis). Orange: expert active rate (right axis). Gray dashed line: SFT baseline loss without BEAM.}}
    \label{fig:training_curves}
\end{figure}

\rebuttal{Figure~\ref{fig:training_curves} shows the language modeling loss and expert active rate during BEAM training ($\beta=0.1$) on all three models. The gray dashed line indicates the converged loss of standard SFT without BEAM.
Two observations emerge.
First, BEAM's language modeling loss converges to a level comparable to the SFT baseline across all models, confirming that the mask router and sparsity regularization do not compromise model capacity.
Second, expert sparsification concentrates in the first $\sim$0.5 epoch, where the active rate drops sharply from near 100\% to a stable plateau. The remaining training focuses on optimizing the language modeling objective under the learned sparsity pattern.}

\subsection{More Baseline Comparison}
\label{sec:more_baseline}

\rebuttal{We additionally compare BEAM with DynMoE~\citep{guo2024dynamic}, which replaces hard Top-K routing with sigmoid-gated expert selection based on token-expert affinity. Table~\ref{tab:dynmoe_comparison} summarizes the results on all three evaluated models.}

\begin{table}[!htbp]
    \centering
    \caption{\rebuttal{Performance comparisons of DynMoE and BEAM relative to the original models.}}
    \label{tab:dynmoe_comparison}
    \resizebox{\linewidth}{!}{
    \begin{tabular}{l|c|ccc|ccc|cc|c}
    \toprule
    \rowcolor{gray!25}
     &
    & \multicolumn{3}{c|}{\textbf{Reasoning}}
    & \multicolumn{3}{c|}{\textbf{Knowledge}}
    & \multicolumn{2}{c|}{\textbf{CommonSense}}
    &   \\

    \noalign{\vspace{-0.85mm}}
    \cmidrule(lr){3-5}  \cmidrule(lr){6-8} \cmidrule(lr){9-10} \noalign{\vspace{-0.85mm}}
    \rowcolor{gray!25}
    \multirow{-2}{*}{\textbf{Methods \textbackslash Tasks}}  &
    \multirow{-2}{*}{\makecell[c]{\textbf{Avg. K} } }&
    \textbf{MATH} & \textbf{GSM8K} & \textbf{H\_Eval} & \textbf{MMLU} & \textbf{CEVAL} & \textbf{CMMLU} & \textbf{BoolQ} & \textbf{CSQA} & \multirow{-2}{*}{\makecell[c]{\textbf{Avg.} \\ (Acc. $\uparrow$)  } }  \\
    \midrule
    \rowcolor{blue!5}
    \multicolumn{11}{c}{\textit{Qwen1.5-MoE-A2.7B}} \\
    Qwen1.5-MoE$_{~\text{K=4}}$ & 4.00 & 23.04 & 57.47 & 50.61 & 59.28 & 74.15 & 75.18 & 72.63 & 81.33 & 61.71 \\
    \rowcolor{gray!10}
    DynMoE & 30.06 & 10.72 & 45.03 & 31.10 & 42.94 & 37.60 & 39.35 & 60.89 & 61.51 & 41.14 \\
     \textbf{BEAM}$_{~(\beta=0.01)}$ & 1.56 & 24.78 & 55.50 & 53.05 & 58.75 & 70.47 & 69.18 & 78.32 & 80.84 & 61.36 \\
    \midrule
    \rowcolor{blue!5}
    \multicolumn{11}{c}{\textit{Qwen3-30B-A3B}} \\
    Qwen3-30B-A3B$_{~\text{K=8}}$ & 8.00 & 58.28 & 88.02 & 82.93 & 81.80 & 83.56 & 83.69 & 86.76 & 86.24 & 81.41 \\
    \rowcolor{gray!10}
    DynMoE & 61.66 & 19.46 & 66.49 & 39.02 & 40.64 & 36.42 & 36.51 & 58.50 & 51.11 & 43.52 \\
    \textbf{BEAM}$_{~(\beta=0.01)}$ & 4.23 & 55.16 & 85.52 & 81.71 & 80.09 & 81.46 & 81.53 & 88.07 & 86.40 & 79.99 \\
    \midrule
    \rowcolor{blue!5}
    \multicolumn{11}{c}{\textit{DeepSeekV2-Lite}} \\
    DeepSeekV2-Lite$_{~\text{K=6}}$ & 6.00 & 20.02 & 62.70 & 43.90 & 55.04 & 55.26 & 60.93 & 75.20 & 68.14 & 55.15 \\
    \rowcolor{gray!10}
    DynMoE & 30.50 & 0.04 & 1.36 & 0.00 & 5.11 & 6.63 & 0.41 & 8.81 & 6.39 & 3.59 \\
    \textbf{BEAM}$_{~(\beta=0.01)}$ & 2.61 & 20.36 & 60.27 & 46.95 & 56.65 & 54.12 & 60.42 & 76.57 & 65.11 & 55.06 \\
    \bottomrule
    \end{tabular}
    }
\end{table}

\rebuttal{
DynMoE exhibits substantial instability in the post-training setting, as its learned gating tends to over-activate experts far beyond the original Top-K budget. Specifically, the average number of activated experts rises to 61.66 on Qwen3-30B-A3B (vs. Top-K = 8), 30.06 on Qwen1.5-MoE-A2.7B (vs. Top-K = 4), and 30.50 on DeepSeekV2-Lite (vs. Top-K = 6). This instability also leads to severe accuracy degradation, most notably on DeepSeekV2-Lite, where performance collapses from 55.15 to 3.59 average accuracy.
This is likely because DynMoE completely replaces the original router architecture, making it ill-suited for post-training scenarios where preserving the pretrained routing structure is critical. In contrast, BEAM achieves substantially higher sparsity while retaining over 98\% of the original model's performance across all three models.}

\rebuttal{\subsection{Layer-wise Masking Rank Analysis}}
\label{sec:layerwise_rank}

\begin{figure}[!h]
    \centering
    \includegraphics[width=0.8\linewidth]{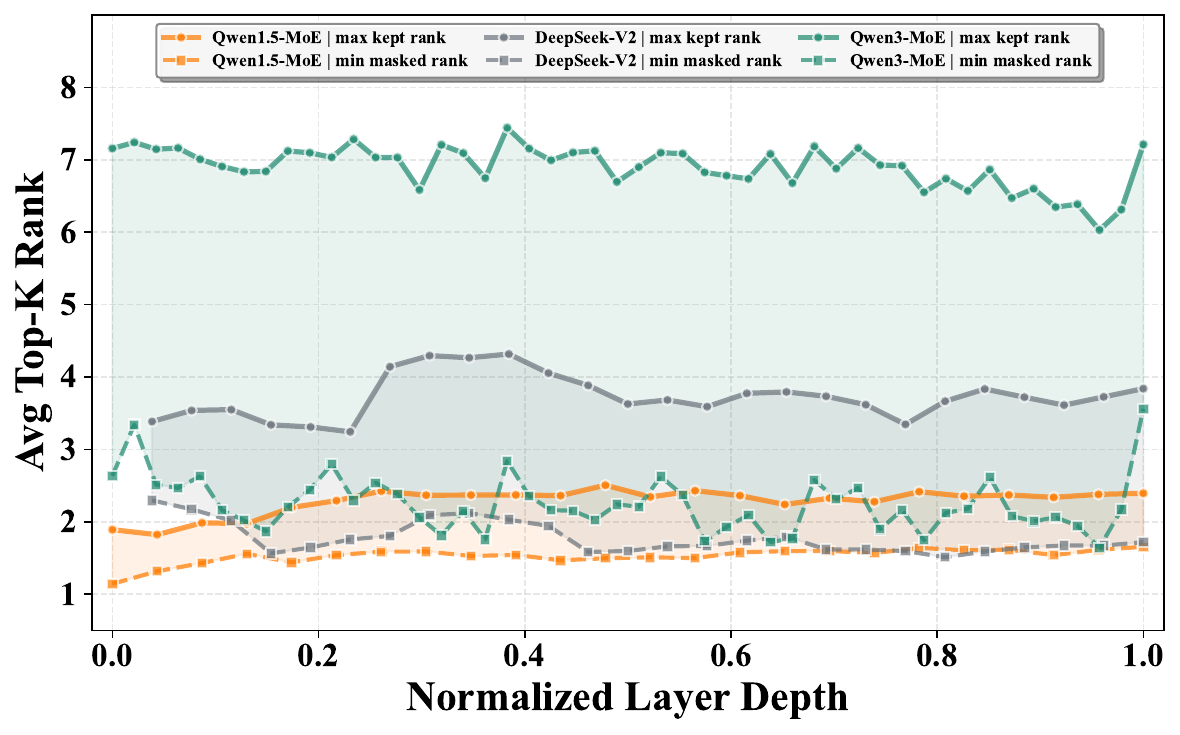}
    \caption{\rebuttal{Layer-wise masking rank analysis across three MoE models. The shaded region between the min masked rank and max kept rank indicates the overlap zone where BEAM's masking decisions are token-dependent.}}
    \label{fig:layerwise_rank}
\end{figure}

\rebuttal{To further investigate whether BEAM's masking decisions follow routing rank, we record the minimum masked rank and maximum kept rank per layer across all three models (Figure~\ref{fig:layerwise_rank}).
Across all layers and models, the min masked rank stays as low as 1--3, meaning that even highly-ranked experts are frequently pruned when redundant for a given token. Meanwhile, the max kept rank extends to the lower end of the Top-K range, confirming that low-ranked experts can be retained when critical. The wide overlap between masked and kept ranks demonstrates that BEAM's decisions are driven by token-expert relevance rather than routing position.}

\subsection{Expert Load-Balance Analysis}
\label{sec:expert_load_balance_analysis}

To investigate BEAM's impact on load balancing in MoE models, we visualize the utilization rates of experts for both original model and BEAM-augmented model during inference in our studied MoE models, as shown in Figure~\ref{fig:load_balance}. The results demonstrate that the BEAM method performs uniform masking across experts, maintaining relatively balanced expert loads even on models such as Qwen3-30B-A3B that contains 128 experts. This finding highlights the applicability of the BEAM method to large-scale expert-parallel MoE models.

\begin{figure}
    \centering
    \includegraphics[width=.9\linewidth]{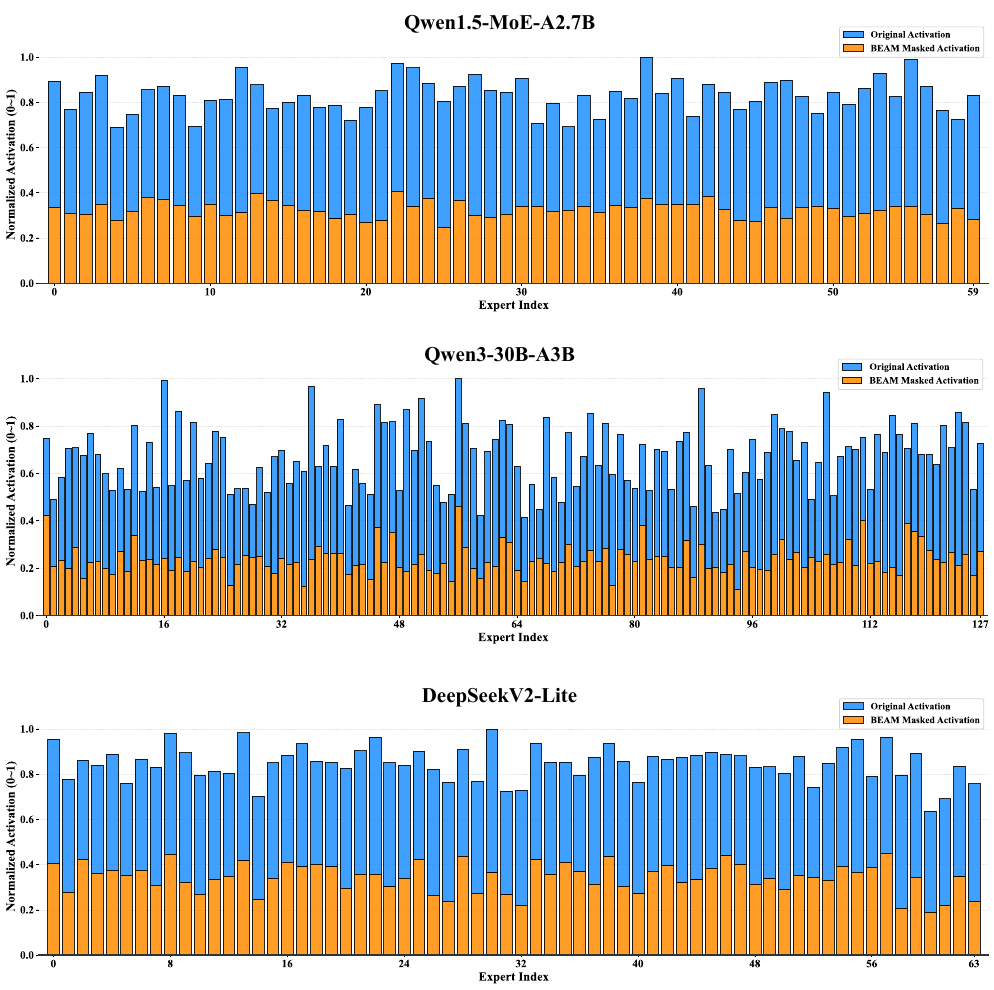}
    \caption{Expert load balance visualization of MoE models before and after BEAM fine-tuning.}
    \label{fig:load_balance}
\end{figure}

\subsection{Task-specific Inference Speed Analysis}
\label{sec:task_specific_analysis}

\begin{table}[!htbp]
\centering
\small
\caption{Task-specific acceleration comparisons on Qwen3-30B-A3B.}
\renewcommand{\arraystretch}{0.9}
\label{tab:speed_test}
\resizebox{\textwidth}{!}{%
\begin{tabular}{lccccccccc}
\toprule
\rowcolor{gray!20}
\textbf{Model} & \textbf{MATH} & \textbf{GSM8K} & \textbf{H\_Eval} & \textbf{MMLU} & \textbf{CEVAL} & \textbf{CMMLU} & \textbf{BoolQ} & \textbf{CSQA} & \textbf{All} \\
\midrule
Qwen3-30B-A3B & 2667s & 329s & 43s & 980s & 77s & 460s & 89s & 46s & 4691s \\
BEAM & 2058s & 247s & 28s & 709s & 54s & 330s & 81s & 36s & 3543s \\
\midrule
\textbf{Speedup} & \textbf{1.30x} & \textbf{1.33x} & \textbf{1.53x} & \textbf{1.38x} & \textbf{1.42x} & \textbf{1.39x} & \textbf{1.10x} & \textbf{1.27x} & \textbf{1.32x} \\
\bottomrule
\end{tabular}%
}
\end{table}

To further evaluate BEAM's acceleration across tasks, we measure the inference speed of the BEAM-augmented Qwen3-MoE model ($\beta=0.1$) and its baseline on several evaluation benchmarks using vLLM on a single NVIDIA H20 GPU. The results are summarized in Table~\ref{tab:speed_test}. BEAM achieves consistent speedups across all tasks, indicating efficiency improvements in both prefill and decoding. 

\subsection{Token-Layer Sparsity Visualization}
\label{sec:token_layer_sparsity_visualization}

We visualize the per-token and per-layer expert activation patterns of Qwen1.5-MoE-A2.7B, DeepSeekV2-Lite, and Qwen3-30B-A3B on the same input prompt: ``In only one sentence, what do you think of the future of AI?''.
The results are shown in Figures~\ref{fig:ds_token_layer_expert_heatmap}, \ref{fig:qwen2_token_layer_expert_heatmap}, and \ref{fig:qwen3_token_layer_expert_heatmap}, which show significant differences in activation patterns across models and layers. Chat template tokens such as ''You are a helpful assistant'' activate almost no experts across all models. Notably, the degree of prefill-decode divergence varies by model: Qwen3 exhibits substantially more expert activation during decoding than prefill, Qwen1.5-MoE shows a moderate increase, while DeepSeekV2-Lite maintains largely consistent activation across both phases. For Qwen models, we also observe that prefill tokens mainly activate experts in shallow layers whereas decode tokens make stronger use of deeper layers, developing an encoder-decoder-like pattern where shallow layers handle knowledge encoding and deeper layers focus on reasoning.

\begin{figure}[!h]
    \centering
    \includegraphics[width=0.7\linewidth]{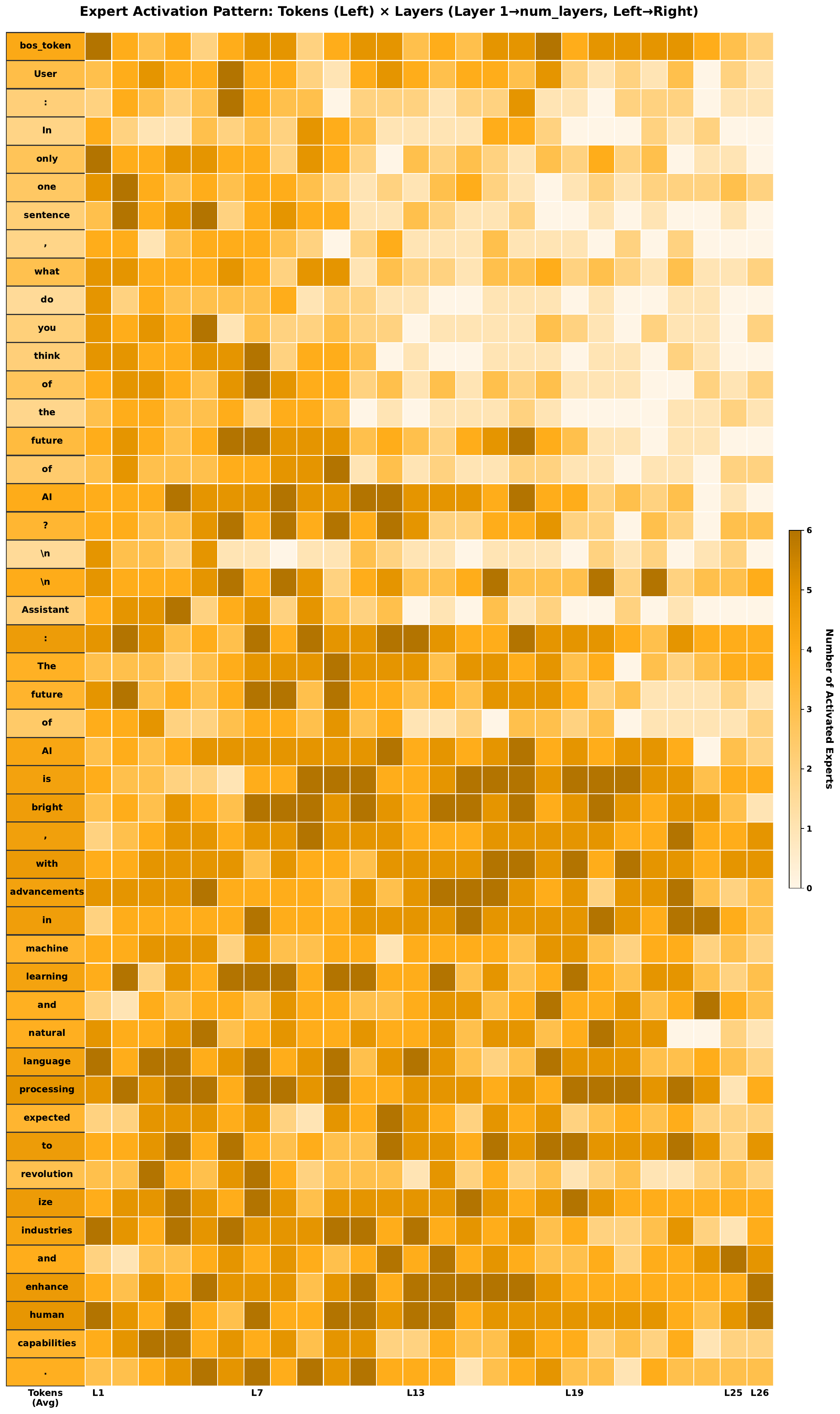}
    \caption{Per-token and per-layer expert activation heatmap for DeepSeekV2-Lite. Each cell indicates the number of activated experts for a token (vertical axis) at a given layer (horizontal axis).}
    \label{fig:ds_token_layer_expert_heatmap}
\end{figure}

\begin{figure}[!h]
    \centering
    \includegraphics[width=0.65\linewidth]{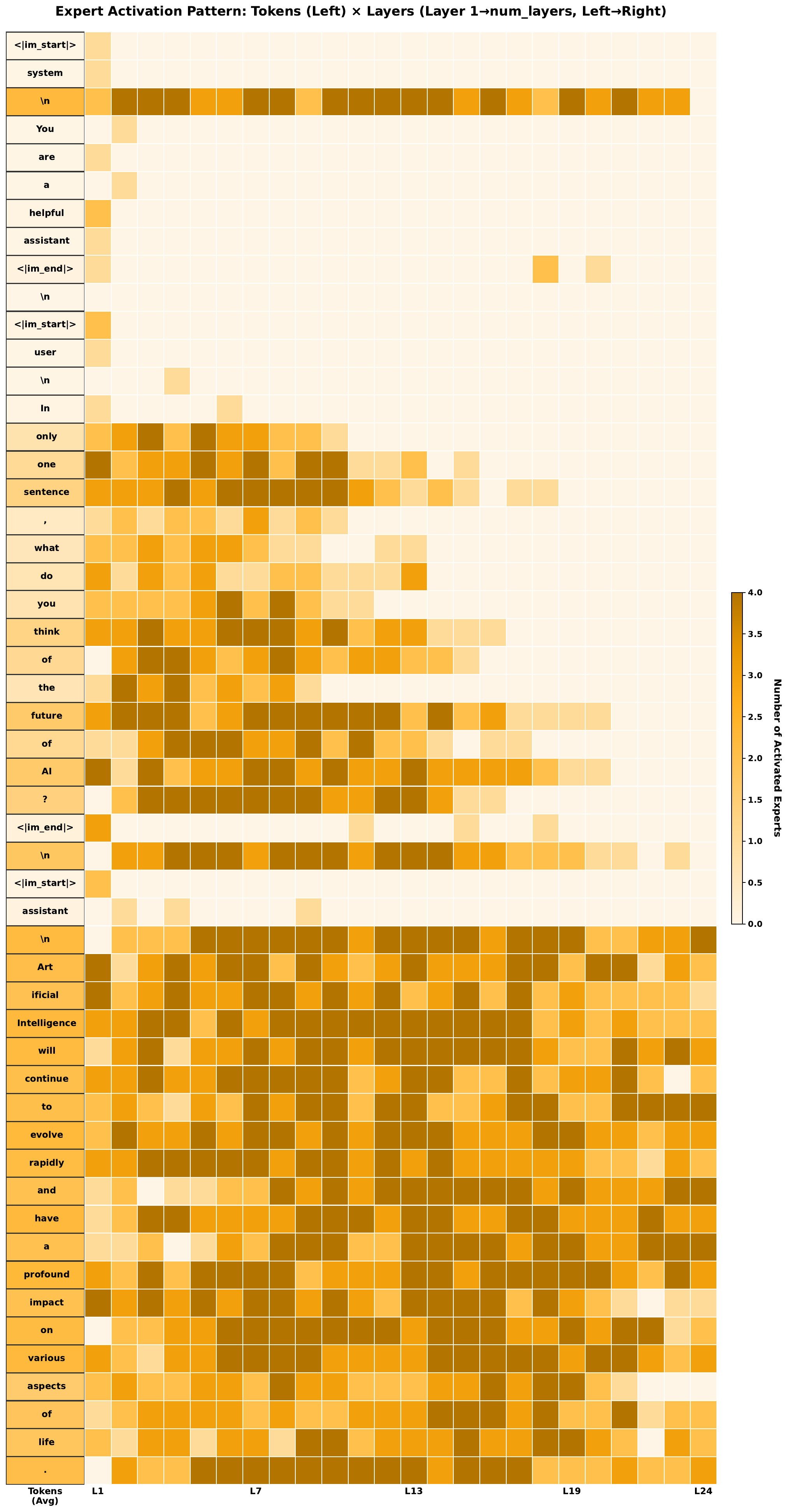}
    \caption{Per-token and per-layer expert activation heatmap for Qwen1.5-MoE-A2.7B. Each cell indicates the number of activated experts for a token (vertical axis) at a given layer (horizontal axis).}
    \label{fig:qwen2_token_layer_expert_heatmap}
\end{figure}

\begin{figure}[!h]
    \centering
    \includegraphics[width=0.93\linewidth]{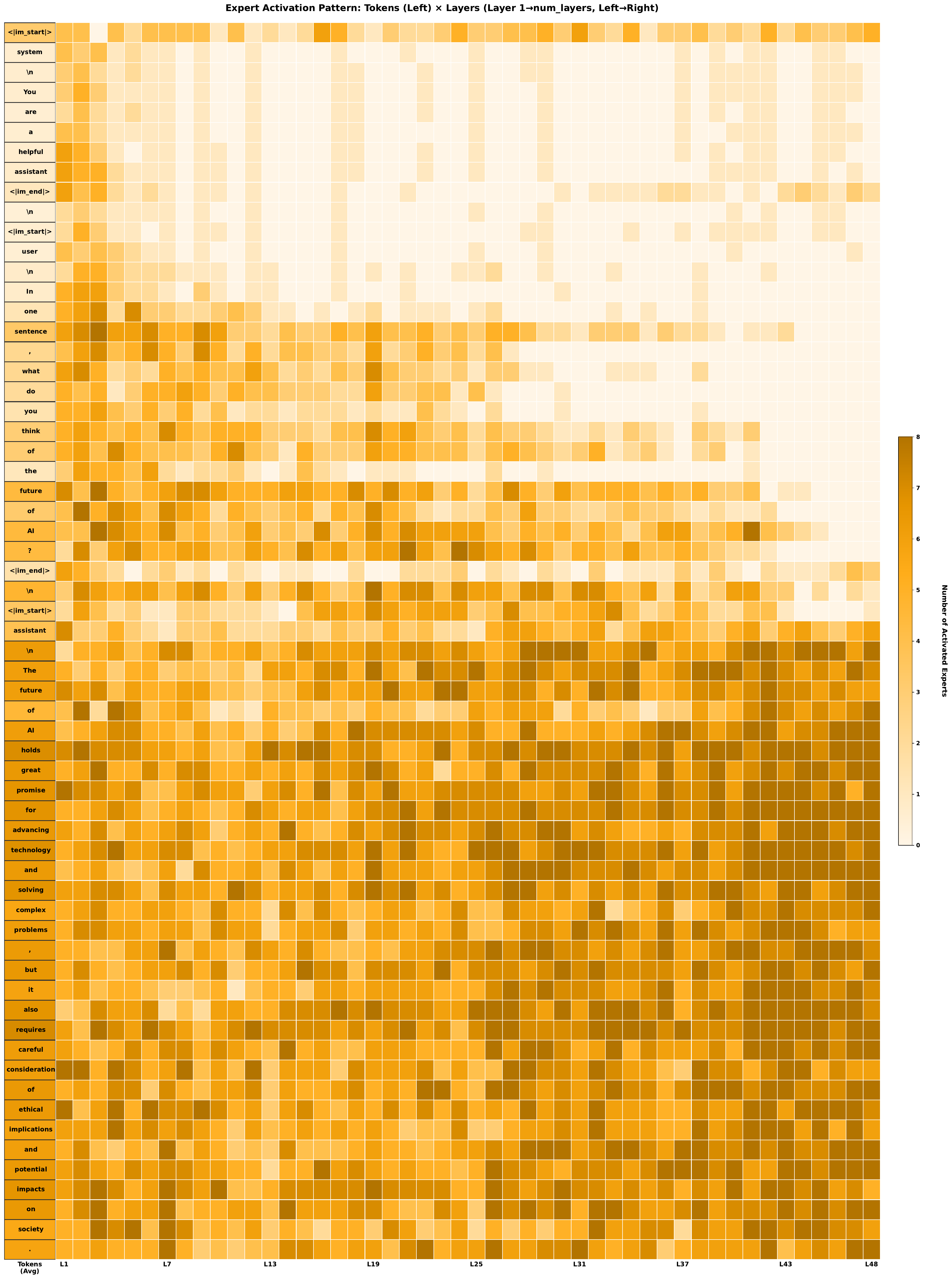}
    \caption{Per-token and per-layer expert activation heatmap for Qwen3-30B-A3B. Each cell indicates the number of activated experts for a token (vertical axis) at a given layer (horizontal axis).}
    \label{fig:qwen3_token_layer_expert_heatmap}
\end{figure}




\end{document}